\crefname{section}{\S}{\S\S} %
\Crefname{section}{\S}{\S\S} %
\crefname{lemma}{Lemma}{Lemmata}
\crefname{table}{Table}{Tables}
\crefname{figure}{Figure}{Figures}
\crefname{algorithm}{Algorithm}{}
\crefname{equation}{Eq.}{Eqs.}
\crefname{appendix}{App.}{Appendices}
\newtheorem{definition}{Definition}[section]
\newtheorem{lemma}{Lemma}[section]
\newtheorem{remark}{Remark}
\theoremstyle{definition}
\crefname{thm}{Theorem}{Theorems}
\crefname{remark}{Remark}{Remarks}
\crefname{defin}{Def.}{Defs.}
\newcommand*\iftodonotes{\if@todonotes@disabled\expandafter\@secondoftwo\else\expandafter\@firstoftwo\fi} 
\definecolor{ETHBlue}{RGB}{33,92,175}	%
\definecolor{ETHGreen}{RGB}{98,115,19}		%
\definecolor{ETHPurpleDark}{RGB}{140,10,89}	%
\definecolor{ETHPurple}{RGB}{163,7,116}	%
\definecolor{ETHGray}{RGB}{111,111,111}	%
\definecolor{ETHRed}{RGB}{183,53,45}	%
\definecolor{ETHPetrol}{RGB}{0,120,148}	%
\definecolor{ETHBronze}{RGB}{142,103,19}	%
\colorlet{ETHdarkblue}{ETHBlue!80!black}
\colorlet{ETHdarkgreen}{ETHGreen!80!black}
\colorlet{ETHpink}{ETHPurple}
\colorlet{ETHgray}{ETHGray}
\colorlet{ETHred}{ETHRed}
\colorlet{ETHgreenblue}{ETHPetrol}
\colorlet{ETHbrown}{ETHBronze}
\definecolor{TextBlack}{RGB}{51,51,51}
\definecolor{BackgroundWhite}{RGB}{255,255,255}
\definecolor{AccentBlue}{RGB}{0,122,204}
\definecolor{LightBlue}{RGB}{173,216,230}
\definecolor{DarkBlue}{RGB}{0,51,102}
\definecolor{AccentGreen}{RGB}{70,160,73}
\definecolor{LightGreen}{RGB}{144,238,144}
\definecolor{DarkGreen}{RGB}{0,100,0}
\definecolor{AccentRed}{RGB}{255,0,0}
\definecolor{LightRed}{RGB}{255,99,71}
\definecolor{DarkRed}{RGB}{139,0,0}
\definecolor{AccentOrange}{RGB}{255,165,0}
\definecolor{LightOrange}{RGB}{255,204,153}
\definecolor{DarkOrange}{RGB}{255,140,0}
\definecolor{NeutralLightGray}{RGB}{204,204,204}
\definecolor{NeutralMediumGray}{RGB}{102,102,102}
\definecolor{NoteYellow}{RGB}{255,255,0}
\definecolor{DiversePurple}{RGB}{128,0,128}
\definecolor{DiverseTeal}{RGB}{0,128,128}
\definecolor{DiverseOlive}{RGB}{128,128,0}
\definecolor{DiverseCyan}{RGB}{0,128,192}
\definecolor{DiverseMagenta}{RGB}{192,0,128}
\colorlet{MacroColor}{ETHGreen}
\colorlet{MACROCOLOR}{MacroColor}
\newcommand{\justification}[1]{%
    \refstepcounter{equation}%
    \tag{\textcolor{black!50}{\footnotesize{#1},} \theequation}
}
\newcommand{\mymacro}[1]{{#1}}
\newcommand{\defn}[1]{\textbf{#1}}
\newcommand{\defeq}{\mathrel{\stackrel{\textnormal{\tiny def}}{=}}}
\newcommand{\sigmoid}{{\mymacro{\sigma}}}
\newcommand{\ind}[1]{\mathbbm{1} \left\{ #1 \right\}}
\newcommand{\bijection}{\mymacro{f}}
\newcommand{\counts}{\mymacro{N}}
\newcommand{\countsFun}[1]{{\mymacro{\counts}\!\left(#1\right)}}
\newcommand{\pfsaAcr}{{\mymacro{PFSA}}\xspace}
\newcommand{\dpfsaAcr}{{\mymacro{DPFSA}}\xspace}
\newcommand{\automaton}{{\mymacro{\mathcal{A}}}}
\newcommand{\wfsa}{{\mymacro{\automaton}}}
\newcommand{\stateq}{{\mymacro{q}}}
\newcommand{\states}{{\mymacro{Q}}}
\newcommand{\trans}{{\mymacro{\delta}}}
\newcommand{\prevq}{{\mymacro{\iota }}}
\newcommand{\prevqFun}[1]{{\mymacro{\prevq}}\!\left(#1\right)}
\newcommand{\nextq}{{\mymacro{\varphi }}}
\newcommand{\nextqFun}[1]{{\mymacro{\nextq}}\!\left(#1\right)}
\newcommand{\pathweightFun}[1]{{\mymacro{\boldsymbol{w}(#1)}}}
\newcommand{\innerpathweightFun}[1]{\mymacro{\overline{\boldsymbol{w}}(#1)}}
\newcommand{\apath}{{\mymacro{\boldsymbol \pi}}}
\newcommand{\pathlen}{{\mymacro{N}}}
\newcommand{\paths}{{\mymacro{\Pi}}}
\newcommand{\initf}{{\mymacro{\lambda}}}
\newcommand{\finalf}{{\mymacro{\rho}}}
\newcommand{\initfFun}[1]{{\mymacro{\initf\!\left(#1\right)}}}
\newcommand{\finalfFun}[1]{{\mymacro{\finalf\!\left(#1\right)}}}
\newcommand{\qinit}{{\mymacro{q_{\iota}}}}
\newcommand{\wfsatuple}{{\mymacro{\left( \alphabet, \states, \trans, \initf, \finalf \right)}}}
\newcommand{\meanStrLen}{{\mymacro{\mu}}}
\newcommand{\entropy}{{\mymacro{\mathrm{H}}}}
\newcommand{\entropyFun}[1]{{\mymacro{\entropy}(#1)}}
\newcommand{\entropyhat}{{\mymacro{\mathrm{\widehat{H}}}}}
\newcommand{\entropyhatFun}[1]{{\mymacro{\entropyhat}\!\left(#1\right)}}
\newcommand{\mlocalentropy}{{\mymacro{\mathrm{H}_{m}}}}
\newcommand{\edge}[4]{{\mymacro{#1 \xrightarrow{#2 / #3} #4}}}
\newcommand{\tranMtx}{{\mymacro{\mM}}}
\newcommand{\etranMtx}{{\mymacro{\emM}}}
\newcommand{\sTranMtx}[1]{{\mymacro{\tranMtx}^{(#1)}}}
\newcommand{\esTranMtx}[1]{{\mymacro{\etranMtx}^{(#1)}}}
\newcommand{\yield}{{\mymacro{\textbf{s}}}}
\newcommand{\yieldFun}[1]{{\mymacro{\yield}(#1)}}
\newcommand{\bigO}{{\mymacro{\mathcal{O}}}}
\newcommand{\bigOFun}[1]{{\mymacro{\bigO}\!\left(#1\right)}}
\newcommand{\idx}{{\mymacro{n}}}
\newcommand{\nstates}{{\mymacro{|\states|}}}
\newcommand{\nsymbols}{{\mymacro{|\alphabet|}}}
\newcommand{\tstep}{{\mymacro{t}}}
\newcommand{\vlambda}{{\mymacro{\boldsymbol{\lambda}}}}
\newcommand{\prob}{{\mymacro{\textnormal{\textbf{Pr}}}}}
\newcommand{\probFun}[1]{{\mymacro{\prob}}(#1)}
\newcommand{\pdens}{{\mymacro{p}}}
\newcommand{\qdens}{{\mymacro{q}}}
\newcommand{\pLM}{\mymacro{\pdens}}
\newcommand{\pLMwfsa}{\mymacro{\pLM_{\scaleto{\wfsa}{4pt}}}}
\newcommand{\pLMFun}[1]{\mymacro{\pLM}\!\left(#1\right)}
\newcommand{\pLMwfsaFun}[1]{\mymacro{\pLMwfsa}(#1)}
\newcommand{\qLM}{\mymacro{\qdens}}
\newcommand{\qLMFun}[1]{\mymacro{\qLM}\!\left(#1\right)}
\newcommand{\qLMprefix}{\mymacro{\overrightharpoonup{\qdens}}}
\newcommand{\pLMhat}{\mymacro{\widehat{\pLM}}}
\newcommand{\pLMhatFun}[1]{\mymacro{\pLMhat}\!\left(#1\right)}
\newcommand{\pLMprefix}{\mymacro{\overrightharpoonup{\pdens}}}
\newcommand{\pLMprefixFun}[1]{{\mymacro{\pLMprefix}\!\left(#1\right)}}
\newcommand{\prefixnorm}{{\mymacro{\overrightharpoonup{Z}}}}
\newcommand{\pLMwfsaprefix}{\mymacro{\overrightharpoonup{\pdens}_{\scaleto{\wfsa}{4pt}}}}
\newcommand{\pLMwfsaprefixFun}[1]{{\mymacro{\pLMwfsaprefix}\!\left(#1\right)}}
\newcommand{\pLMinfix}{%
  \mymacro{\overset{%
    \smash{
      \mathrlap{\xleftharpoonup{\phantom{\pdens}}}%
      \xrightharpoonup{\phantom{\pdens}}%
    }
  }{\smash[t]{\pdens}\rule{0pt}{0.06cm}}%
}}
\newcommand{\pLMinfixFun}[1]{{\mymacro{\pLMinfix}\!\left(#1\right)}}
\newcommand{\pLMwfsainfix}{\mymacro{\pLMinfix_{\scaleto{\wfsa}{4pt}}}}
\newcommand{\pLMwfsainfixFun}[1]{{\mymacro{\pLMwfsainfix}\!\left(#1\right)}}
\newcommand{\infixnorm}{%
  \mymacro{\overset{%
    \smash{
      \mathrlap{\xleftharpoonup{\phantom{Z}}}%
      \xrightharpoonup{\phantom{Z}}%
    }\rule{0pt}{0.13cm}
  }{\smash[t]{Z}\rule{0pt}{0.16cm}}%
}}
\newcommand{\kleene}[1]{#1^\ast}
\newcommand{\alphabet}{{\mymacro{\Sigma}}}
\newcommand{\mAlphabet}{{\mymacro{\alphabet^{m - 1}}}}
\newcommand{\klAlphabet}{{\mymacro{\kleene{\alphabet}}}}
\newcommand{\str}{{\mymacro{\boldsymbol{y}}}}
\newcommand{\ctx}{{\mymacro{\boldsymbol{c}}}}
\newcommand{\strlt}{{\mymacro{\str_{<\tstep}}}}
\newcommand{\strlen}{{\mymacro{T}}}
\newcommand{\bos}{{\mymacro{\textsc{bos}}}}
\newcommand{\eos}{{\mymacro{\textsc{eos}}}}
\newcommand{\eosalphabet}{{\mymacro{\overline{\alphabet}}}}
\newcommand{\ngram}{{\mymacro{\textit{n}-gram}}\xspace}
\newcommand{\mlocal}{{\mymacro{\textit{m}-local}}\xspace}
\newcommand{\Mlocal}{{\mymacro{\textit{M}-local}}\xspace}
\newcommand{\sym}{{\mymacro{y}}}
\newcommand{\rvstr}{{\mymacro{\boldsymbol{Y}}}}
\newcommand{\rvctx}{{\mymacro{\boldsymbol{C}}}}
\newcommand{\rvpathpfx}{{\mymacro{\overrightharpoonup{\Pi}}}}
\newcommand{\rvpathinfx}{%
  \mymacro{\overset{%
    \smash{
      \mathrlap{\xleftharpoonup{\phantom{\Pi}}}%
      \xrightharpoonup{\phantom{\Pi}}%
    }\rule{0pt}{0.11cm}
  }{\smash[t]{\Pi}\rule{0pt}{0.17cm}}%
}}
\newcommand{\rvNextSymbol}{\mymacro{\overline{Y}}}
\newcommand{\rvPrefix}{\mymacro{\overrightharpoonup{\smash[t]{\bm{Y}}\rule{0pt}{0.22cm}}}}
\newcommand{\rvContextNextSymbol}{\mymacro{\rvNextSymbol}}
\newcommand{\symT}{{\mymacro{\sym_{\strlen}}}}
\def\1{\mymacro{\mathbf{1}}}
\newcommand{\dataset}{{\mymacro{\sD}}}
\newcommand{\datasetSize}{{\mymacro{N}}}
\newcommand{\totalsymcount}{\mymacro{S}}
\def\vzero{\mymacro{\bm{0}}}
\def\vrho{\mymacro{\bm{\rho}}}
\def\mE{\mymacro{\bm{E}}}
\def\mI{\mymacro{\bm{I}}}
\def\mM{\mymacro{\bm{M}}}
\DeclareMathAlphabet{\mathsfit}{\encodingdefault}{\sfdefault}{m}{sl}
\SetMathAlphabet{\mathsfit}{bold}{\encodingdefault}{\sfdefault}{bx}{n}
\def\symy{{\mymacro{y}}}
\def\sD{{\mymacro{\mathbb{D}}}}
\def\emE{\mymacro{E}}
\def\emM{\mymacro{M}}
\newcommand{\E}{\mathbb{E}}
\newcommand{\R}{\mathbb{R}}
\newcommand{\KL}{{\mymacro{D_{\mathrm{KL}}}}}
\newcommand{\KLhat}{{\mymacro{\widehat{D}_{\mathrm{KL}}}}}
\newcommand{\normltwo}{L^2}
\newcommand{\thestring}{\mymacro{w}}
\newcommand{\thesymboli}[1]{\mymacro{\thestring_{#1}}}
\newcommand{\thelength}{\mymacro{n}}
\newcommand{\thetimestep}{\mymacro{t}}
\newcommand{\thelayerno}{\mymacro{\ell}}
\newcommand{\thenumlayers}{\mymacro{L}}
\newcommand{\thehiddensize}{\mymacro{d}}
\newcommand{\realset}{\mymacro{\R}}
\newcommand{\vecvar}[1]{\bm{#1}}
\newcommand{\matvar}[1]{\bm{#1}}
\newcommand{\logistic}[1]{\sigmoid(#1)}
\newcommand{\elementwisemultiply}{\mathbin{\mymacro{\odot}}}
\newcommand{\vectorparam}[2]{\vecvar{#1}_{\mathrm{#2}}}
\newcommand{\vectorparaml}[3]{\vectorparam{#1}{#2}^{(#3)}}
\newcommand{\lineartransforml}[3]{\weightparaml{#1}{#2} #3}
\newcommand{\affinel}[3]{\lineartransforml{#1}{#2}{#3} + \biasparaml{#1}{#2}}
\newcommand{\weightparamletter}{\mymacro{\matvar{W}}}
\newcommand{\weightparaml}[2]{\mymacro{\weightparamletter^{(#2)}_{\mathrm{#1}}}}
\newcommand{\biasparamletter}{\mymacro{\vecvar{b}}}
\newcommand{\biasparaml}[2]{\mymacro{\biasparamletter^{(#2)}_{\mathrm{#1}}}}
\newcommand{\dropout}[1]{\textsc{Dropout}(#1)}
\newcommand{\thehiddenstateli}[2]{\mymacro{\vecvar{h}^{(#1)}_{#2}}}
\newcommand{\thehiddenstatedropoutli}[2]{\mymacro{\cancel{\vecvar{h}}^{(#1)}_{#2}}}
\newcommand{\initialhiddenstateparaml}[1]{\mymacro{\vectorparaml{w}{0}{#1}}}
\newcommand{\theinputgateli}[2]{\mymacro{\vecvar{i}^{(#1)}_{#2}}}
\newcommand{\theforgetgateli}[2]{\mymacro{\vecvar{f}^{(#1)}_{#2}}}
\newcommand{\thecandidateli}[2]{\mymacro{\vecvar{g}^{(#1)}_{#2}}}
\newcommand{\theoutputgateli}[2]{\mymacro{\vecvar{o}^{(#1)}_{#2}}}
\newcommand{\thememorycellli}[2]{\mymacro{\vecvar{c}^{(#1)}_{#2}}}
\newcommand{\thehiddeni}[1]{\mymacro{\vecvar{h}_{#1}}}
\newcommand{\theinputembeddingi}[1]{\mymacro{\vecvar{x}_{#1}}}
\newcommand{\theembeddingmatrix}{\mymacro{\matvar{E}}}
\title{Information Locality as an Inductive Bias for Neural Language Models}
\author{
 \textbf{Taiga Someya}$^{1}$\thanks{~~This research was conducted while visiting ETH Zürich.}~\;~
 \textbf{Anej Svete}$^{2}$ ~\;~
 \textbf{Brian DuSell}$^{2}$\\
 \textbf{Timothy J. O'Donnell}$^{3}$ ~\;~
 \textbf{Mario Giulianelli}$^{2}$ ~\;~
 \textbf{Ryan Cotterell}$^{2}$ \\
 \textsuperscript{1}The University of Tokyo ~\;~
 \textsuperscript{2}ETH Zürich ~\;~
 \textsuperscript{3}McGill University ~\;~
 \\
   \texttt{\href{mailto:taiga98-0809@g.ecc.u-tokyo.ac.jp}{taiga98-0809}}\texttt{@g.ecc.u-tokyo.ac.jp} \quad \texttt{\href{mailto:timothy.odonnell@mcgill.ca}{timothy.odonnell}}\texttt{@mcgill.ca}  \\
   \{\texttt{\href{mailto:asvete@inf.ethz.ch}{asvete}},
   \texttt{\href{mailto:brian.dusell@inf.ethz.ch}{brian.dusell}}, \texttt{\href{mailto:mario.giulianelli@inf.ethz.ch}{mario.giulianelli}}, \texttt{\href{mailto:ryan.cotterell@inf.ethz.ch}{ryan.cotterell}}\}\texttt{@inf.ethz.ch}
}
\begin{document}

\maketitle

\begin{abstract}
Inductive biases are inherent in every machine learning system, shaping how models generalize from finite data.
In the case of neural language models (LMs), debates persist as to whether these biases align with or diverge from human processing constraints.
To address this issue, we propose a quantitative framework that allows for controlled investigations into the nature of these biases.
Within our framework, we introduce \mlocal entropy---an information-theoretic measure derived from average lossy-context surprisal---that captures the local uncertainty %
of a language by quantifying how effectively the $m-1$ preceding symbols disambiguate the next symbol.
In experiments on both perturbed natural language corpora and languages defined by probabilistic finite-state automata (\pfsaAcr{}s), we show that languages with higher \mlocal entropy are more difficult for Transformer and LSTM LMs to learn.
These results suggest that neural LMs, much like humans, are highly sensitive to the local statistical structure of a language.
\begin{center}
    \faGithub~\url{https://github.com/rycolab/lm-inductive-bias}
\end{center}
\end{abstract}

\section{Introduction}
\begin{figure}[t]
    \centering
    \includegraphics[width=\linewidth]{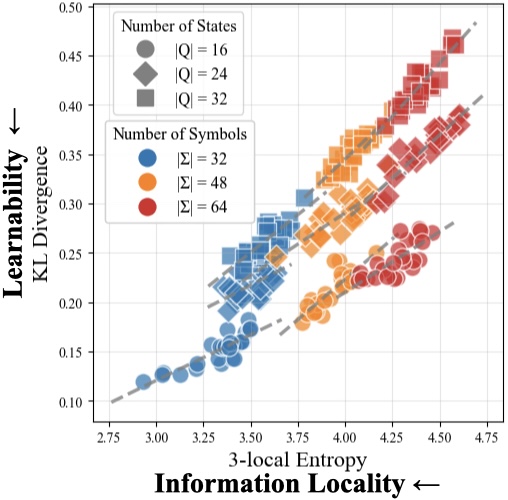}
    \caption{KL divergence (Transformer LM) as a function of the 3-local entropy of the language generated from a \pfsaAcr{} in Experiment 2. LMs perform better at languages with lower local entropy.}
    \label{fig:main_fig}
\end{figure}
Every machine learning system has some form of inductive bias: given a finite training sample with potentially infinitely many compatible hypotheses, its architecture and learning algorithm predispose it to favor certain generalizations over others \cite{mitchell1980need,Rawski2019NoFL}.
The concept of inductive bias is central to the growing discussion of whether the inductive biases of neural network LMs align with the cognitive pressures that shape human language learning.
In a 2023 New York Times article, \citeauthor{Chomsky_etal_2023} argued that neural LMs possess inductive biases fundamentally different from human cognitive constraints, a claim that has motivated theoretical rebuttals \citep{piantadosi_modern_2024} as well as empirical research to test the extent of this divergence \citep{kallini-etal-2024-mission,Ahuja2024Learning}.\looseness=-1

In particular, \citet{kallini-etal-2024-mission} demonstrated that perturbing natural language corpora to alter their sequential structure, making the languages less human-like, renders languages harder for neural LMs to learn.
While their findings suggest that disrupting local structure (e.g., through local shuffling transformations) impacts learnability, they do not isolate the specific property responsible for this effect.
To rigorously assess whether a neural LM's inductive biases align with human constraints, we must identify \emph{quantifiable properties} of language that affect human learning difficulty and \emph{systematically manipulate} these properties in controlled experiments with neural LMs.
Information-theoretic models of language processing, which view the structure of languages as shaped by language users' joint optimization of informativity and complexity, provide a promising framework for identifying these properties.\looseness=-1

In this paper, we turn to the principles of \defn{information locality}, which suggest that language is structured to minimize linear distance between linguistic elements with high mutual information~\citep{gibson1998locality,gibson2001dependency,futrell-2019-information,futrell-etal-2020-lossy}.
Information locality is thought to arise from the memory limitations of human processors, which make it challenging to integrate long-range linguistic dependencies \citep{hahn-etal-2022-resource}.
The influence of these cognitive constraints has been observed across multiple timescales, in both language comprehension and production, and across diverse languages of the world \citep{hahn_modeling_2021,Futrell2023InformationtheoreticPI,Futrell2024LinguisticSF}.
Demonstrating that a neural LM's learnability of a language is influenced by its local predictability would reveal an inductive bias that aligns with the functional pressures shaping human language processing.\looseness=-1

As a first step in this direction, we propose using \mlocal entropy, an information-theoretic measure designed to quantify a linear notion of local predictability which can be derived from the principles of information locality \citep{futrell-etal-2020-lossy}.\footnote{
   More specifically, \mlocal entropy can be derived from the lossy-context surprisal theory of language comprehension \citep{futrell-etal-2020-lossy}, a theory belonging to the expectation-based family of theories of language processing \citep{hale-2001-probabilistic,levy-2008}. See \cref{sec:mlocal_entropy} for details.
}
We study how \mlocal entropy affects learnability in two sets of experiments: one where we perturb natural language corpora, and another where we randomly generate probabilistic finite-state automata (\pfsaAcr{}s).
We train LSTM and Transformer LMs on these languages and examine whether neural LMs' difficulty in learning a language is predicted by the language's \mlocal entropy, to see if neural LMs and humans share an inductive bias for information locality.
Our experiments demonstrate that \mlocal entropy negatively correlates with the ability of a neural LM to learn a probabilistic language.
Specifically, our experiment with an English natural language corpus---and its perturbed variants---reveals that LSTM and Transformer LMs show systematic degradation in performance as \mlocal entropy increases, even when global and next-symbol entropy are held constant.
Furthermore, in experiments using \pfsaAcr{}s, we manipulate the properties of languages more \emph{systematically} and show that this trend is not an artifact of the corpora or particular perturbation functions used in our experiments.

\section{Formal Background}\label{sec:technical-preliminaries}

\subsection{Languages and Language Models}\label{sec:background}
An \defn{alphabet} $\alphabet$ is a finite, non-empty set of symbols.
The \defn{Kleene closure} $\klAlphabet$ is the set of all strings with symbols from $\alphabet$. We use $|\str|$ to denote the length of $\str \in \klAlphabet$.
A \defn{language} is a subset of $\klAlphabet$.

A \defn{language model} $\pLM$ is a probability distribution over $\klAlphabet$.
The \defn{prefix probability} $\pLMprefixFun{\str}$ is the probability that a string begins with $\str \in \klAlphabet$:
\begin{equation} \label{eq:prefix-prob}
\pLMprefixFun{\str} \defeq \sum_{\str' \in \klAlphabet} \pLMFun{\str\str'}
\end{equation}
The \defn{conditional prefix probability} of a string $\str' \in \klAlphabet$ given another string $\str$ is given by
\begin{equation}\label{eq:ratio}
\pLMprefixFun{\str' \mid \str} = \frac{\pLMprefixFun{\str\str'}}{\pLMprefixFun{\str}}.
\end{equation}

Using the notion of a conditional prefix probability, one can factorize a language model $\pLM$ as\footnote{Modern neural LMs (e.g., LSTMs, Transformers) directly parameterize the \emph{next–symbol} probability distribution $\qLMprefix(\sym \mid \str)$ over $\eosalphabet$, which coincides with the conditional prefix probability $\pLMprefixFun{\sym \mid \str}$. Training therefore maximizes the log‐likelihood of observed symbols, and the full string probability $\qLMFun{\str}$ is recovered via the product in~\eqref{eq:conditional-prob-product}.}
\begin{equation}
    \label{eq:conditional-prob-product}
    \pLMFun{\str} = \pLMprefixFun{\eos \mid \str} \prod_{t= 1}^{|\str|} \pLMprefixFun{\sym_t \mid \strlt},
\end{equation}
where each $\pLMprefix\left(\sym_t \mid \strlt\right)$ is a distribution over $\eosalphabet \defeq \alphabet \cup \{\eos\}$, where $\eos \not\in \alphabet$ is a distinguished \underline{e}nd-\underline{o}f-\underline{s}tring symbol, and
where we define
\begin{equation}
\label{eq:eos-prob}
    \pLMprefixFun{\eos \mid \str} \defeq \frac{\pLMFun{\str}}{\pLMprefixFun{\str}}.
\end{equation}
We define $\pLM$'s \defn{infix probability} $\pLMinfix$ as
\begin{equation} \label{eq:infix-prob}
\pLMinfixFun{\str} \defeq \sum_{\str' \in \klAlphabet} \sum_{\str'' \in \klAlphabet} \pLMFun{\str'\str\str''}.
\end{equation}
Note that, despite denoting the probability of an event, $\pLMprefix$ and $\pLMinfix$ are \emph{not} probability distributions over $\kleene{\alphabet}$.
In order to convert them into probability distributions, we have to renormalize them.
However, the the sums $\prefixnorm \defeq \sum_{\str \in \kleene{\alphabet}} \pLMprefixFun{\str}$ and $\infixnorm \defeq \sum_{\str \in \kleene{\alphabet}} \pLMinfixFun{\str}$ may diverge.
A necessary and sufficient condition for $\prefixnorm$ to be finite is that the expected length $\meanStrLen \defeq \E_{\str \sim \pLM} |\str|$ under $\pLM$ is finite.\footnote{See \citet[][Prop. 1]{opedal-etal-2024-role}.}
Thus, in the following, we assume that $\meanStrLen \defeq \E_{\str \sim \pLM} |\str| < +\infty$, which implies that we can \emph{normalize} the prefix probability function to form probability distributions.
We do not require that $\infixnorm$ be finite in this paper.
However, one can show that $\infixnorm$ is finite if and only if $\E_{\str \sim \pLM} |\str|^2 < +\infty$.

\paragraph{Random Variables.}
We will use the following random variables in our exposition.
First, let $\rvstr$ be a $\klAlphabet$-valued random variable distributed as
\begin{equation}
    \prob(\rvstr = \str) \defeq \pLMFun{\str}.
\end{equation}
Then, let $\rvPrefix$ be a $\klAlphabet$-valued random variable distributed according to
\begin{equation}
    \prob(\rvPrefix = \str) \defeq \frac{\pLMprefixFun{\str}}{\prefixnorm}.
\end{equation}
Next, let $\rvNextSymbol$ be a $\eosalphabet$-valued random variable jointly distributed with $\rvPrefix$ according to
\begin{equation}
    \prob(\rvNextSymbol = \sym \mid \rvPrefix = \str) \defeq \pLMprefixFun{\sym \mid \str}.
\end{equation}
Finally, let $\rvctx$ be a $\mAlphabet$-valued random variable distributed according to $\pLMinfix$ normalized over $\mAlphabet$:
\begin{equation} \label{def:normalized_ctx_probability}
    \prob\left(\rvctx = \ctx\right) \defeq \frac{\pLMinfixFun{\ctx}}{\sum_{\ctx' \in \mAlphabet} \pLMinfixFun{\ctx'}}.
\end{equation}
$\prob\left(\rvctx = \ctx\right)$ can be interpreted as observing $\ctx$ as a length-$(m - 1)$ substring of a string sampled from $\pLM$.
Additionally, the random variable $\rvContextNextSymbol$ is conditionally distributed, given $\rvctx$, according to
\begin{equation}
    \prob(\rvContextNextSymbol = \sym \mid \rvctx = \ctx) = \sum_{\str' \in \kleene{\alphabet}} \pLMprefixFun{\sym \mid \str' \ctx} \pLMprefixFun{\str'},
\end{equation}
i.e., as the next symbol given that the previous $m - 1$ symbols were $\ctx$.

\subsection{Global Entropy and \Mlocal Entropy} \label{sec:global_and_local_entropy}
The concept of \emph{entropy}, as introduced by \citet{shannon1948}, provides a foundational framework for quantifying uncertainty in language.
Depending on how one defines the underlying probability distribution over linguistic units, entropy can capture different aspects of language complexity.
In this paper, we discuss two versions of entropy---\emph{global} entropy (i.e., the \citeauthor{shannon1948} entropy) and \emph{\mlocal} entropy---each capturing different facets of language complexity.

\subsubsection{Global Entropy} \label{sec:global_entropy}
The \defn{global entropy} of $\pLM$ is defined as
\begin{equation}\label{eq:entropy}
    \entropyFun{\rvstr} \defeq - \sum_{\str \in \klAlphabet} \pLMFun{\str} \log\pLMFun{\str}.
\end{equation}
Note that \Cref{eq:entropy} may be infinite for some $\pLM$.
However, for the remainder of our paper, we will assume that $\entropyFun{\rvstr} < + \infty$.
\Cref{eq:entropy} reflects the uncertainty in the distribution over all possible strings $\str \in \klAlphabet$: Higher global entropy indicates that $\pLM$ distributes probability mass more uniformly across strings.\footnote{Because $\kleene{\alphabet}$ is countably infinite, there is no uniform distribution over $\kleene{\alphabet}$.}

We further define the (global) next-symbol entropy for finite-mean-length language models as the weighted average of the entropy of all local next-symbol distributions, averaging over all possible contexts $\str \in \klAlphabet$, and weighted by the normalized prefix probability of $\str$.\footnote{The weighting cannot be uniform, as $\klAlphabet$ is infinite.}
First, for a specific context $\str$, we define the context-specific next-symbol entropy as follows:
\begin{equation}
\begin{aligned}\label{eq:context-specific}
    &\entropyFun{\rvNextSymbol \mid \rvPrefix = \str} \\
    &\quad \qquad \defeq -\sum_{\sym \in \eosalphabet} \pLMprefixFun{\sym \mid \str} \log \pLMprefixFun{\sym \mid \str}.
\end{aligned}
\end{equation}
Then, using \Cref{eq:context-specific}, we construct the \defn{next-symbol entropy} as
\begin{subequations} \label{eq:next-symbol-entropy}
    \begin{align}
          \mathrm{H}(&\rvNextSymbol \mid \rvPrefix) \nonumber \\
          &= \sum_{\str \in \klAlphabet} \prob(\rvPrefix = \str)\,\entropyFun{\rvNextSymbol \mid \rvPrefix = \str} \\
          &= \sum_{\str \in \klAlphabet} \frac{\pLMprefixFun{\str}}{\prefixnorm}  \,\entropyFun{\rvNextSymbol \mid \rvPrefix = \str} \\
        &= \frac{1}{\meanStrLen + 1} \sum_{\str \in \klAlphabet} \pLMprefixFun{\str}   \,\entropyFun{\rvNextSymbol \mid \rvPrefix = \str} \\
        &= \frac{1}{\meanStrLen + 1} \,\entropyFun{\rvstr}. \label{eq:next-sym-entropy-length}
    \end{align}
\end{subequations}
where we exploit the identity $\meanStrLen + 1 = \prefixnorm$ and where the last equality follows from \citet[][Thm. 2.2]{malagutti-etal-2024-role}.
What \Cref{eq:next-sym-entropy-length} tells us is that the next-symbol entropy is \emph{proportional} to the global entropy.
So, if both were used as predictors in a linear model, they would yield identical predictive power.\looseness=-1

\paragraph{Invariance of global and next-symbol entropy.} \label{sec:invariance_of_global_entropy}
Our goal in this paper is to isolate how local uncertainty of a language affects the performance of neural LMs in learning the language.
Global entropy and its length-normalized variant, next-symbol entropy, are obvious benchmarks, yet neither is sensitive to the local statistical structure we want to manipulate.
Global entropy is invariant to bijective transformations of $\klAlphabet$; that is, for any bijection $\bijection \colon \klAlphabet \rightarrow \klAlphabet$, $\entropyFun{\rvstr} = \entropyFun{\bijection(\rvstr)}$.
Next-symbol entropy is only slightly more rigid: it is, in general, not conserved under bijection, but it remains constant under any length-preserving bijection.
To capture the aspect of local uncertainty that these measures miss, we introduce \mlocal entropy in the next section.
Unlike global entropy and next-symbol entropy, \mlocal entropy \textit{does} change under the length-preserving, bijective perturbations used in our experiment (\Cref{sec:constructing_langs}), which makes it possible to create a set of counterfactual corpora where each corpus has the same global and next-symbol entropy but has different \mlocal entropy.

\subsubsection{\Mlocal Entropy} \label{sec:mlocal_entropy}
Next-symbol entropy measures uncertainty over next-symbol predictions conditioned on the full available context, averaged across all possible contexts.
This can be seen as the limit of a \emph{local} quantitification of uncertainty, which captures the unpredictability of the next symbol given a fixed amount of preceding context.
We term this fixed-context uncertainty measure \defn{\mlocal local entropy}.

Given any $\ctx \in \mAlphabet$, we can compute
\begin{align}\label{def:mlocal_entropy_given_ctx}
    &\entropyFun{\rvContextNextSymbol \mid \rvctx=\ctx}  = -\sum_{\sym \in \eosalphabet}
        \prob(\rvContextNextSymbol=\sym \mid \rvctx=\ctx) \nonumber \\
    &\qquad\qquad\qquad \log \prob(\rvContextNextSymbol=\sym \mid \rvctx=\ctx).
\end{align}
This captures the unpredictability of a symbol $\symy$ after observing a given local context $\ctx$.
We can then say that the \defn{\mlocal entropy} of $\pLM$ is an expectation over possible contexts $\ctx \in \mAlphabet$, with each context weighted by $\prob\left(\rvctx = \ctx\right)$:
\begin{equation}
    \label{def:mlocal_entropy}
    \entropyFun{\rvContextNextSymbol \mid \rvctx} =
    \sum_{\ctx \in \mAlphabet} \prob\left(\rvctx = \ctx\right) \, \entropyFun{\rvContextNextSymbol \mid \rvctx = \ctx}.
\end{equation}
This yields a measure of local complexity that can differ from global entropy by more than a multiplicative constant.
Even when two languages have identical \emph{global} entropy, their \emph{\mlocal} entropies reflect differences in how reliably the local context predicts the next symbol.
Importantly, unlike global entropy, local entropy is \emph{not} necessarily preserved under bijective transformations of $\klAlphabet$, which enables us to assess the impact of such transformations on learnability.
Our experiments show that transformations that disrupt local statistical structure are associated with how well neural LMs are able to learn a probabilistic language.

\paragraph{\Mlocal entropy and lossy-context surprisal.}
As a generalization of the surprisal model of language processing difficulty~\citep{hale-2001-probabilistic,levy-2008}, \citet{futrell-etal-2020-lossy}
propose \emph{lossy‐context surprisal}.
In this framework, the predicted difficulty of an upcoming word is proportional to the word’s expected log probability given a \emph{lossy} memory representation of the preceding context.
A memory encoding function specifies a distribution over such representations.
If that function keeps only the $m\!-\!1$ symbols immediately preceding the target word, the metric collapses to the familiar $m$-gram surprisal.
Averaging this quantity over all possible contexts with language-specific weights yields the average (lossy-context) surprisal of a language \citep{futrell-2019-information,hahn_modeling_2021}.
When those weights are the contexts' infix probabilities (see~\cref{eq:infix-prob}), the average coincides with our definition of \mlocal entropy (\cref{def:mlocal_entropy}).
To our knowledge, no prior work has linked lossy-context surprisal directly to language model learnability---a connection that our work aims to explore.

\subsection{Probabilistic Finite-State Automata} \label{sec:pfsa}
\begin{definition}
    A \defn{probabilistic finite-state automaton} (\pfsaAcr{}) is a 5-tuple $\wfsatuple$ where
    \begin{itemize}[nosep,noitemsep]
        \item $\alphabet$ is an alphabet,
        \item  $\states$ is a finite set of states,
        \item $\trans \subseteq \states \times \alphabet \times \left[0, 1\right] \times \states$ is a finite set of weighted transitions, rendered as $\edge{\stateq}{\sym}{w}{\stateq'}$ with $\sym \in \alphabet$ and $w \in [0,1]$,
        \item $\initf, \finalf\colon \states \rightarrow \left[0, 1\right]$ are the initial and final weighting functions,
        \item $\initf$ satisfies $\sum_{\stateq \in \states} \initf\left(\stateq\right) = 1$, and
        \item for all $\stateq \in \states$, $\sum_{\edge{\stateq}{\sym}{w}{\stateq'} \in \trans} w + \finalf\left(\stateq\right) = 1$.
    \end{itemize}
\end{definition}
A \defn{path} $\apath$ in a \pfsaAcr $\wfsa$ is a sequence of consecutive transitions
$\edge{\stateq_0}{\sym_1}{w_1}{} \cdots \edge{}{\sym_{\pathlen}}{w_{\pathlen}}{\stateq_{\pathlen}}$.
We define its \defn{scan} as $\yield\left(\apath\right) \defeq \sym_1 \cdots \sym_{\pathlen}$.
$\paths(\automaton, \str)$ denotes the set of all paths in $\automaton$ that scan $\str \in \klAlphabet$.
The \defn{inner path weight} of $\apath$ is $\innerpathweightFun{\apath} = \prod_{\idx = 1}^\pathlen w_\idx$, and its \defn{path weight} is $\pathweightFun{\apath} = \initfFun{\stateq_0}\innerpathweightFun{\apath}\finalfFun{\stateq_{\pathlen}}$.

A \pfsaAcr{} $\wfsa$ induces a language model $\pLMwfsa$ as
\begin{equation}
    \pLMwfsaFun{\str} \defeq \sum_{\apath \in \paths\left( \automaton, \str \right) } \pathweightFun{\apath}.
\end{equation}

Studying \pfsaAcr{}s not only allows us to perform controlled experiments but also enables us to compute many quantities of interest exactly.
\cref{sec:pfsa_detail} contains a collection of closed-form solutions for computing various quantities of interest, including the string (prefix and infix) probabilities and the \mlocal entropy of the induced language model.

\section{Experiment 1: LM Performance along the \Mlocal Entropy Continuum}
\label{sec:exp_natural}
In the first experiment, we investigate the relationship between local entropy and LM performance using a natural language corpus.
We hypothesize that \emph{local entropy} is a key factor in determining how easily an LM learns a language.
To test this hypothesis, we apply a length-preserving, bijective perturbation function to a natural language corpus that alters its local structure.
As we will see in the following section, this results in a counterfactual perturbed corpus~\citep[cf.][]{kallini-etal-2024-mission}, where language models have different \mlocal entropy but the same global entropy and next-symbol entropy.
We then train neural LMs on the naturally occurring corpus and the perturbed one and study how local entropy affects the neural LMs' performance.\looseness=-1

\subsection{Constructing Languages with Different Local Complexity}
\label{sec:constructing_langs}
Here, we detail several specific transformations implemented in our experiments, refining the perturbation functions of \citet{kallini-etal-2024-mission}.
Note that all of the perturbation functions defined below are both bijective and length-preserving.

\paragraph{\textsc{DeterministicShuffle}.}
Given a string of length $T$, $\str = \sym_1\sym_2\cdots\sym_T$, the function
applies a length-specific permutation $\sigma_T$ of the positions
$\{1,\ldots,T\}$.
Each $\sigma_T$ is sampled \emph{once} at the start of the experiment (using
a fixed pseudorandom seed) and then reused.
Therefore, two strings of the same length are shuffled in exactly the same way, whereas strings of different lengths are transformed according to different permutations.
By construction, every $\sigma_T$ is a bijection on $\{1,\ldots,T\}$, so the string length is left unchanged.

\paragraph{\textsc{Reverse}.}
This function reverses the entire sequence of symbols.
Formally, given a string \(\str = \sym_1\,\sym_2 \cdots\symT\), the \textsc{Reverse} mapping produces \(\sym_T\,\sym_{T-1}\cdots\sym_1\).
It is trivially invertible (by applying the same operation again), making it a bijection.\looseness-1

\paragraph{\textsc{EvenOddShuffle/OddEvenShuffle}.}
Let $\str = \sym_1\sym_2\cdots \symT$ be a string of length $T$.
Define two subsequences $O(\str) = \sym_1\sym_3\sym_5\cdots$ and $E(\str) = \sym_2\sym_4\sym_6\cdots$ that collect all symbols in $\str$ at \emph{even} positions and \emph{odd} positions, respectively.
We then define $\textsc{EvenOddShuffle}(\str) = E(\str)O(\str)$ and $\textsc{OddEvenShuffle}(\str) = O(\str)E(\str)$.

\paragraph{\textsc{K-localDeterministicShuffle}.}
Let $\str = \sym_1 \sym_2 \cdots \symT$ be a string in $\klAlphabet$, which we partition into consecutive windows of size $k$.
For the $i^{\text{th}}$ window, $\sym_{(i-1)k+1} \cdots \sym_{ik}$, we apply a fixed permutation $\pi^{k}_i$ determined by window size $k$, window index $i$ and a global random seed.
Formally, the \textsc{K-localDeterministicShuffle} of $\str$  produces
$
    \bigl(\,
    \pi^{k}_1(\sym_1 \cdots \sym_k),\;
    \pi^{k}_2(\sym_{k+1} \cdots \sym_{2k}),\;\ldots
    \bigr).
$\footnote{If the string length $T$ is not a multiple of $k$, then the final window, which contains $\ell\,(< k)$ symbols, is permuted by applying the length-$l$ permutation $\pi^{\ell}_{i}$ to all the available symbols in that window.
}

\begin{table*}[t]
    \centering
    \begin{tabular}{ccccc}
    \toprule
    & \multicolumn{2}{c}{50K samples} & \multicolumn{2}{c}{200K samples} \\
        \cmidrule(lr){2-3} \cmidrule(lr){4-5}
        $m$ & MAE & MRE $(\%)$ & MAE & MRE $(\%)$ \\
        \midrule
        2 & 0.0015 $\pm$ 0.0009 & 0.04 $\pm$ 0.02 & 0.0007 $\pm$ 0.0005 & 0.02 $\pm$ 0.01 \\
        3 & 0.0132 $\pm$ 0.0058 & 0.36 $\pm$ 0.14 & 0.0046 $\pm$ 0.0021 & 0.13 $\pm$ 0.05 \\
        4 & 0.1267 $\pm$ 0.0615 & 3.63 $\pm$ 1.53 & 0.0522 $\pm$ 0.0267 & 1.49 $\pm$ 0.67 \\
        5 & 0.4702 $\pm$ 0.1953 & 13.82 $\pm$ 4.78 & 0.2599 $\pm$ 0.1229 & 7.61 $\pm$ 3.10 \\
        \bottomrule
    \end{tabular}
    \caption{Mean absolute error (MAE) and mean relative error (MRE) of the $m$‑gram estimator of \mlocal entropy (averaged over 16 \pfsaAcr{}s).}
    \label{tab:ngram-m-local-error}
  \vspace{-12pt}
\end{table*}

\subsection{Estimating the \Mlocal Entropy} \label{sec:estimating_mlocal_entropy}
Unfortunately, when we only observe a corpus, the \mlocal entropy of the data-generating distribution is not known.
In this experiment, we estimate it using an \ngram language model implemented with KenLM~\cite{heafield-2011-kenlm}.
Given a corpus $\dataset$, we train an \ngram model on $\dataset$ to get the estimated conditional probability distribution $\pLMhatFun{\sym \mid \ctx}$ for $\ctx \in \mAlphabet$.
Plugging this estimated probability distribution into \Cref{def:mlocal_entropy_given_ctx}, we can compute\looseness-1
\begin{equation} \label{eq:estimated_mlocal_entropy_given_ctx}
   \!\! \entropyhatFun{\rvContextNextSymbol \mid \rvctx = \ctx} = - \frac{1}{\countsFun{\ctx}}\sum_{\sym \in \dataset} \log \pLMhat\left(\sym \mid \ctx \right),
\end{equation}
where $\countsFun{\ctx}$ is the number of times $\ctx$ appears in~$\dataset$.
The normalized infix probability is estimated as
\begin{equation} \label{eq:estimated_normalized_ctx_probability}
    \pLMhatFun{\rvctx = \ctx} = \frac{\countsFun{\ctx}}{N_\mathrm{total}},
\end{equation}
where $N_\mathrm{total} =\sum_{\ctx' \in \mAlphabet} \countsFun{\ctx'}$.

Given these and \Cref{def:mlocal_entropy}, we can approximate the \mlocal entropy as
\begin{subequations} \label{def:mlocal_entropy-est}
    \begin{align}
        \mathrm{H}(\rvContextNextSymbol& \mid \rvctx) \notag \\
        &= \sum_{\ctx \in \mAlphabet} \pLMhatFun{\rvctx = \ctx} \, \entropyhatFun{\rvContextNextSymbol \mid \rvctx = \ctx} \\
        &= -\frac{1}{N_\mathrm{total}} \sum_{\ctx\sym \in \dataset} \log \pLMhatFun{\sym \mid \ctx}.
    \end{align}
\end{subequations}
This estimator is a practical proxy for the quantity in \Cref{def:mlocal_entropy}.
The \mlocal entropy of each corpus is estimated by an \ngram model with order $m-1$ trained on the concatenation of the training, validation, and test set of the corpus.

\subsubsection{Validating \Mlocal Entropy Estimation} \label{sec:ngram_validation}
To ensure the reliability of our \mlocal entropy estimation with \ngram models, we quantify the discrepancy between the \mlocal entropy estimated with an empirical \ngram model with order $m-1$ and the true \mlocal entropy that can be calculated analytically for \pfsaAcr{}s.
We sample 16 \pfsaAcr{}s covering two values for the number of states ($\nstates\in \{8, 16\}$), two alphabet sizes ($\nsymbols \in \{32, 48\}$), two different random topologies, and two different random transition weights (see \Cref{sec:pfsa-generation} for the details).
For each \pfsaAcr{}, we generate two corpora, with 50K and 200K strings respectively, and then estimate \mlocal entropy using \ngram models with order $m-1$ for $m=2,\dots,5$.
For every $(m,\text{PFSA})$ combination, we compute the absolute (AE) and relative (RE) error between the estimated and the true $m$‑local entropy, then average the errors across the 16 \pfsaAcr{}s.
\Cref{tab:ngram-m-local-error} summarizes our results.
With 200K strings, the estimator achieves relative error below one percent up to $m=4$ and remains under $8\%$ even for $m=5$, the most challenging setting.
Increasing the corpus size from 50K to 200K consistently reduces both absolute and relative error, confirming that additional data yield further improvements.
In sum, the $m$‑gram estimator provides a faithful approximation to the true $m$‑local entropy for all $m$ studied.\looseness=-1

\begin{table*}[t]
    \centering
    \begin{tabular}{ccccc}
    \toprule
     & 2-local entropy & 3-local entropy & 4-local entropy & 5-local entropy \\
    \midrule
    \textsc{Base}                & 6.67      & 4.27      & 2.92      & 2.45      \\
    \midrule
    \textsc{Reverse}             & 6.98      & 4.39      & 2.98      & 2.51      \\
    \textsc{EvenOddShuffle}      & 7.91      & 5.10      & 3.76      & 3.43      \\
    \textsc{OddEvenShuffle}      & 7.87      & 5.07      & 3.74      & 3.41      \\
    \textsc{LocalShuffle (k=3)} & 8.12 ± 0.08 & 5.05 ± 0.06 & 3.68 ± 0.07 & 3.39 ± 0.07 \\
    \textsc{LocalShuffle (k=4)} & 8.25 ± 0.07 & 5.17 ± 0.04 & 3.80 ± 0.04 & 3.56 ± 0.05 \\
    \textsc{LocalShuffle (k=5)} & 8.33 ± 0.07 & 5.25 ± 0.04 & 3.88 ± 0.04 & 3.64 ± 0.05 \\
    \textsc{LocalShuffle (k=6)} & 8.43 ± 0.07 & 5.31 ± 0.05 & 3.97 ± 0.06 & 3.72 ± 0.06 \\
    \textsc{LocalShuffle (k=7)} & 8.47 ± 0.07 & 5.36 ± 0.05 & 4.03 ± 0.06 & 3.78 ± 0.07 \\
    \textsc{DeterministicShuffle} & 8.77      & 5.73      & 4.60      & 4.41      \\
    \bottomrule
    \end{tabular}
    \caption{\Mlocal entropy values for the \textsc{Base} (original) corpus and the different perturbed corpora. \textsc{Local shuffle} refers to the \textsc{K-localDeterministicShuffle}. Values are shown as mean ± standard deviation (averaged over different random seeds).}
    \label{tab:local_entropy_bllip}
  \vspace{-5pt}
\end{table*}

\subsection{Experimental Setup}
\subsubsection{Neural Language Models}
\label{sec:neural_lms}

We investigate how varying \mlocal entropy in a language model impacts the performance of two widely used neural LM architectures: the LSTM~\citep{hochreiter-schmidhuber-1997-lstm} and the Transformer~\citep{vaswani-etal-2017-attention}.
We use a single-layer LSTM with 512-dimensional hidden units and a 4-layer causally-masked Transformer encoder with 768-dimensional representations, 3072-dimensional feedforward layers, and 12 attention heads.
Both are implemented in PyTorch~\cite{paszke-etal-2019-pytorch}.
Both architectures are trained on the training set via the standard language modeling objective across 5 random training seeds.
See \cref{sec:architectures,sec:experiments-details} for more details.

\subsubsection{Dataset}
We conduct our experiments on a subset of the Brown Laboratory for Linguistic Information Processing 1987--89 Corpus Release~1~\citep[BLLIP;][]{charniakeugene2000BLLIP}.\footnote{
    We also conduct the same set of experiments using the BabyLM corpus~\cite{choshen2024callpapers2ndbabylm}; results in \Cref{sec:appendix-babylm}.
}
Specifically, we adopt the same training, development, and test splits as \textsc{BLLIP-sm} in \citet{Hu_et_al_2020}, which comprise roughly 200K sentences, totaling around 5M tokens.
Starting from this original corpus, we apply the perturbation functions in \Cref{sec:constructing_langs} to produce perturbed corpora.
For both \textsc{DeterministicShuffle} and \textsc{K-localDeterministicShuffle}, we use 20 random seeds.
In the case of \mbox{\textsc{K-localDeterministicShuffle}}, we vary the parameters $k$ over the set $\{3,4,5,6,7\}$, yielding 20 perturbed corpora for each $k$.
This produces a total of 124 distinct corpora, including other perturbed corpora and the \textsc{Base} (original) corpus.

\subsubsection{Evaluating Learning Difficulty}
\label{sec:learning_language}
In this experiment, we rely on next-symbol cross-entropy as a measure of how well a trained language model $\qLM$ approximates the target distribution.
If $\pLM$ is the ground-truth language model, and $\qLM$ is any learned neural LM, we can estimate the next-symbol cross-entropy as:
\newcommand{\estimatedNextSymbolCrossEntropy}{\entropyhat_{\qLM}}
\begin{align} \label{eq:emp_ce}
  &\estimatedNextSymbolCrossEntropy(\rvNextSymbol \mid \rvPrefix) \notag \\
  &=-\,\frac{1}{\totalsymcount}\sum_{\str \in \dataset} \Bigl[\;\log\qLMprefix\bigl(\rvContextNextSymbol = \eos \,\bigl\vert\, \rvPrefix = \str\bigr)   \notag\\
  & \qquad + \sum_{t=1}^{|\str|}\log\qLMprefix\bigl(\rvContextNextSymbol = \sym_t \,\bigl\vert\, \rvPrefix = \strlt\bigr)\Bigr] ,
\end{align}
where $\dataset = \{\str^{(n)}\}_{n=1}^\datasetSize$ is a set of i.i.d.\ draws from $\pLM$, and $\totalsymcount = \sum_{\str \in \dataset} |\str| + 1$.
In this experiment, we evaluate each LM using the estimated next-symbol cross-entropy on the test set.

When comparing the ability of a neural LM to learn two different target language models, it is essential to account for the inherent entropy of each target language model. In the statistical setting, learning a language means estimating its probability distribution as closely as possible. Consequently, cross-entropy without taking into account the entropy can lead to incorrect conclusions; see, for example, \textsc{NondeterministicShuffle} in \citealp{kallini-etal-2024-mission}.
To address this, our experiments ensure that all corpora have the same inherent (global) entropy; see \Cref{sec:constructing_langs}, which allows us to safely compare cross-entropy results across different languages.

\begin{figure*}
    \centering
    \includegraphics[scale=0.24]{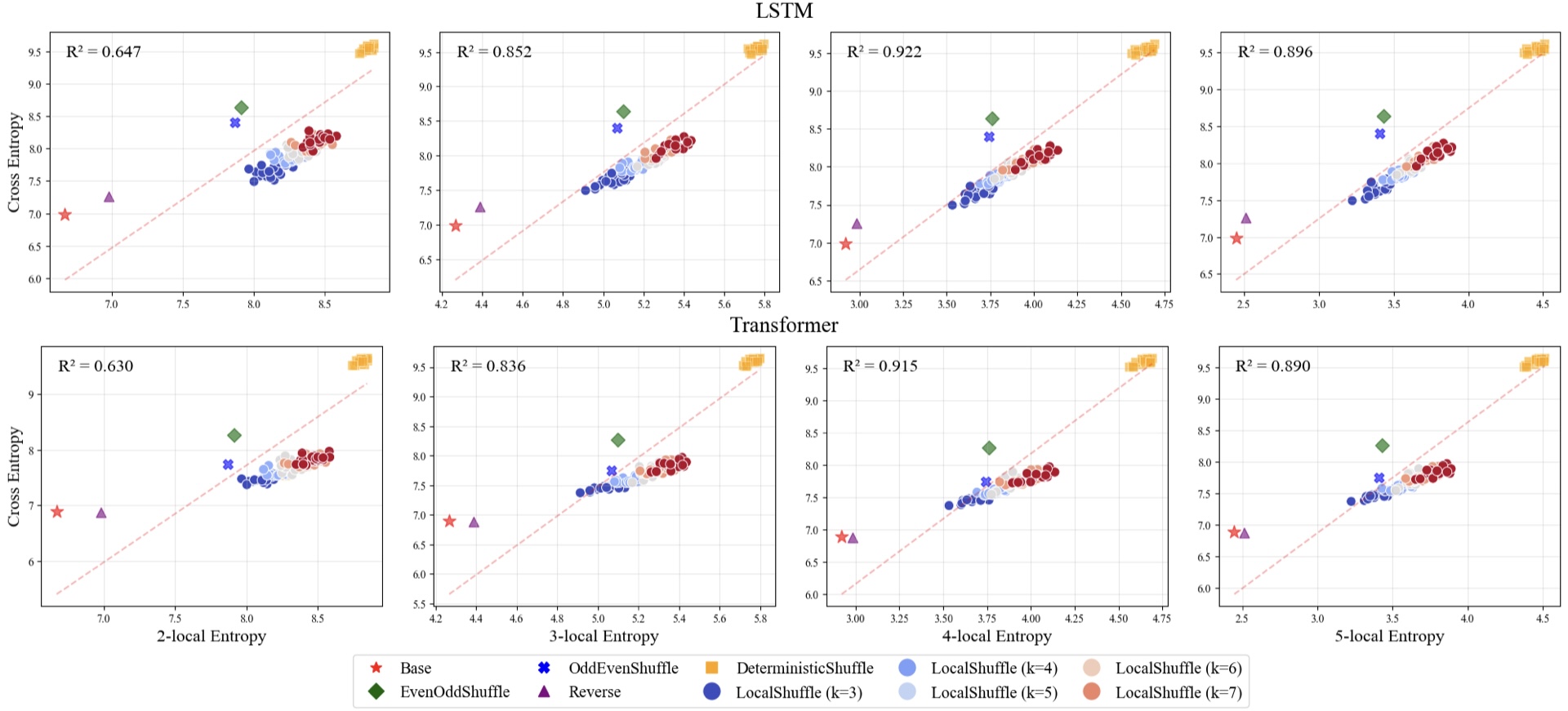}
    \caption{Scatter plots of next-symbol cross-entropy ($y$-axis) versus \mlocal entropy ($x$-axis) for $m \in \{2,3,4,5\}$, for both LSTM (top row) and Transformer LM (bottom row).
    Each marker type/color corresponds to a different perturbation (e.g., \texttt{Reverse}, \texttt{DeterministicShuffle}, \textsc{K-localDeterministicShuffle} with various window sizes, etc.).
    The red star indicates the unperturbed \texttt{Base} condition (original corpus).
    The dashed line in each panel is a linear fit, with $R^2$ indicating the coefficient of determination.}
    \label{fig:lm_performance_bllip}
  \vspace{-5pt}
\end{figure*}

\subsection{Results}
\paragraph{How do different perturbations affect \mlocal entropy?} \label{sec:shuffle_mlocal}

\Cref{tab:local_entropy_bllip} reports the \mlocal entropy values ($m \in \{2,3,4,5\}$) for the \textsc{Base} corpus and the various perturbed corpora.
\textsc{Reverse} barely changes the \mlocal entropy, whereas \textsc{EvenOddShuffle} and \textsc{OddEvenShuffle} increase it somewhat more.
In contrast, \textsc{K-localDeterministicShuffle} yields progressively higher entropy as the window size \(k\) grows, indicating a greater disruption of local ordering.
Finally, \textsc{DeterministicShuffle} produces the highest \mlocal entropies among all transformations.
Our results confirm that the bijective transformations we defined in \Cref{sec:constructing_langs} effectively generate new language models with different \emph{\mlocal} entropies from the original one, while preserving the \emph{global} entropy by design.
This yields a continuum of languages along a specific measurable axis of complexity rather than a qualitative notion of possibility as in \citet{kallini-etal-2024-mission}.\looseness-1

\paragraph{\!\Mlocal Entropy and LM Performance.} \label{sec:mlocal_lm_perf}
\Cref{fig:lm_performance_bllip} shows the relationship between the  \mlocal entropy (estimated by $m$-gram models; \Cref{sec:estimating_mlocal_entropy}) and the next-symbol cross-entropy of each neural LM on the test set.
We observe a strong positive correlation between \mlocal entropy and next-symbol cross-entropy for both neural architectures.
For example, with $m=4$, the coefficient of determination $R^2$ reaches 0.922 for the LSTM LM and 0.915 for the Transformer LM, indicating that higher local ambiguity (as measured by \mlocal entropy) generally leads to decreased performance (i.e., higher next-symbol cross-entropy) under both models.
Furthermore, since our transformations are designed to preserve global entropy and global next-symbol entropy, these results highlight the crucial role of \emph{local} entropy in the learnability of a language by neural LMs. This suggests that neural LMs inherently possess an inductive bias toward languages with lower local entropy.\looseness-1
\section{Experiment 2: Controlled Learnability Tests with \pfsaAcr{}s} \label{sec:exp_pfsa}

\begin{figure*}[t]
    \centering
    \includegraphics[scale=0.23]{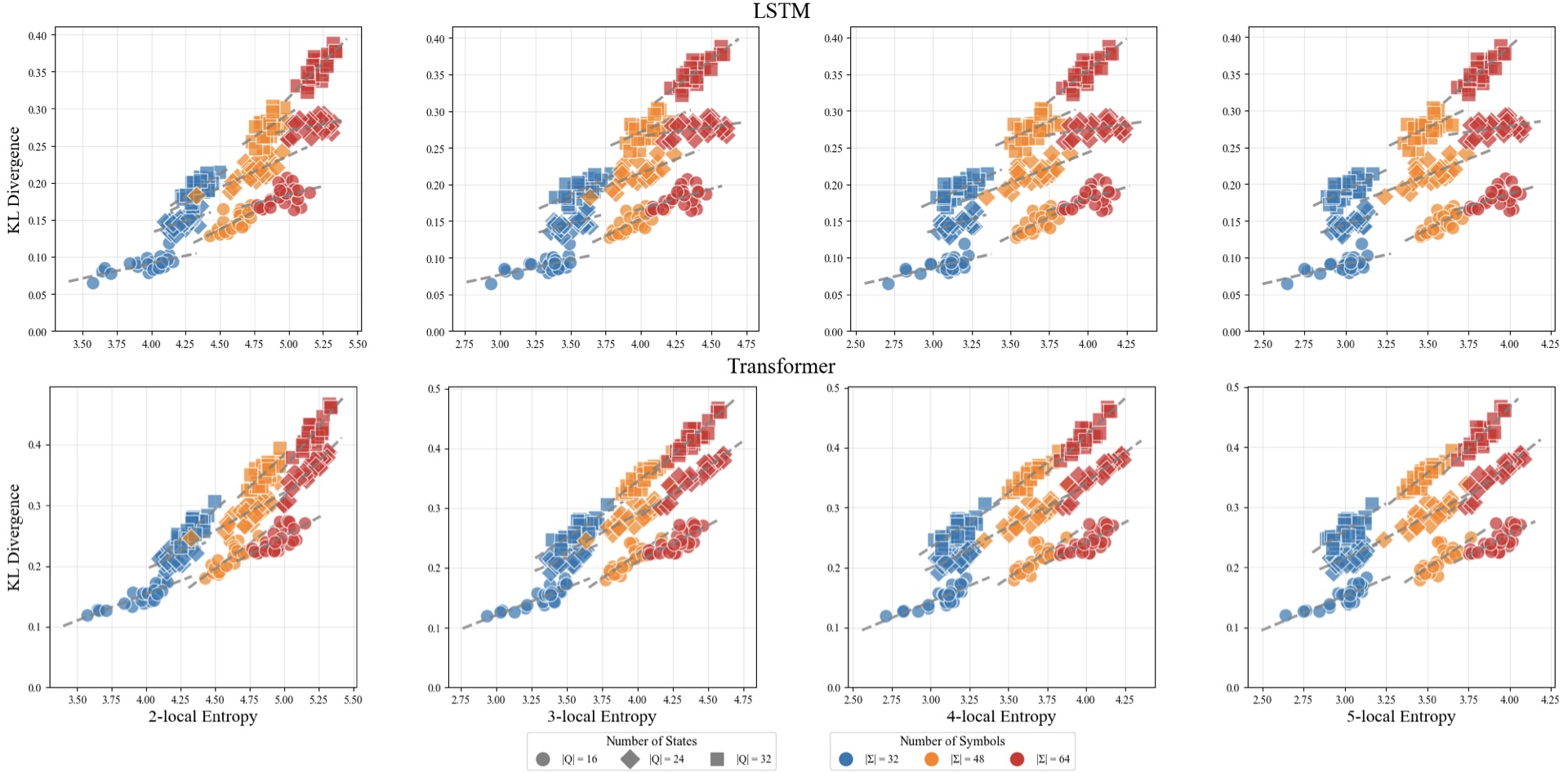}
    \caption{Scatter plots of symbol-level KL divergence (y-axis) versus \mlocal entropy (x-axis) for $m \in \{2,3,4,5\}$, for both LSTM (top row) and causally-masked Transformer encoder (Transformer; bottom row) models.
    Each marker type/color corresponds to a different combination of number of states ($\nstates$) and symbols ($\nsymbols$).
    The dashed line is a linear fit for each cluster.}
    \label{fig:lm_performance_pfsa}
   \vspace{-5pt}
\end{figure*}

Experiment 1 only focused on a specific English corpus and a specific set of perturbation functions.
To confirm that the results are not just an artifact of this experimental design but a fundamental property of neural LMs, we conduct a controlled experiment using \pfsaAcr{}s.
This also enables us to compute quantities of interest exactly, especially the \mlocal entropy of the induced language model.

\subsection{Experimental Setup}
We use the same neural LMs and training configurations as in \Cref{sec:exp_natural}, but we generate datasets using PFSAs (\Cref{sec:exp2-generating-datasets}) and evaluate neural LMs while controlling for global entropy (\Cref{sec:exp2-evaluating}).\footnote{Recall that in Experiment 1 it was unnecessary to control for global entropy.}

\subsubsection{Generating Datasets Using \pfsaAcr{}s}
\label{sec:exp2-generating-datasets}
We construct random \pfsaAcr{}s with alphabet sizes $\nsymbols \in \{32, 48, 64\}$ and numbers of states $\nstates \in \{16, 24, 32\}$.
For each of the nine configurations, we randomly generate 25 automata.
We do this by generating five random \pfsaAcr{} topologies (the underlying multi-graph) and five random weightings for each.
See \Cref{alg:random_dpfsa} in \cref{sec:pfsa-generation} for details.
We sample 20K strings for the training set, 5K for the validation set, and 5K for the test set from $\pLMwfsa$ for each \pfsaAcr{} $\wfsa$.

\subsubsection{Evaluating Learning Difficulty}
\label{sec:exp2-evaluating}
Using PFSAs allows us to compute a range of entropy-related values, including the inherent next-symbol entropy (\Cref{sec:pfsa}), which enables us to evaluate LMs based on KL divergence $\KL$.
Specifically, the estimated  $\KLhat$ is given by subtracting the next-symbol entropy of the PFSA from the estimated next-symbol cross-entropy of the LM (\Cref{eq:emp_ce}):
$
    \KLhat = \estimatedNextSymbolCrossEntropy(\rvNextSymbol \mid \rvPrefix) - \entropyFun{\rvNextSymbol \mid \rvPrefix}
$.
In this second experiment, we evaluate each LM using $\KLhat$ on the test set.\looseness=-1

\subsection{\Mlocal Entropy and LM Performance}
\Cref{fig:lm_performance_pfsa} shows the relationship between the \mlocal entropy of \pfsaAcr{}s (calculated analytically; \Cref{sec:pfsa_detail}) and the KL divergence of each neural LM on the test set; see \Cref{tab:mlocal_entropy_pfsa_correlation} for Pearson correlation coefficients.
The experimental results reveal a clear positive correlation between \mlocal entropy and $\KLhat$ across both architectures and all values of \(m = 2,3,4,5\), indicating that neural LMs find it more challenging to model distributions with higher local uncertainty.
The Transformer LM consistently shows higher $\KLhat$ compared to the LSTM within each topological cluster, suggesting that LSTMs are more effective at modeling these particular probability distributions~\citep{weiss-etal-2018-practical,borenstein-etal-2024-easy}.
Additionally, when $\nsymbols$ is constant, $\KLhat$ is higher for \pfsaAcr{}s with larger $\nstates$, consistent with \citet{borenstein-etal-2024-easy}.\looseness-1

\section{Discussion and Conclusion}
By proposing \mlocal entropy as a predictor of learning difficulty grounded in lossy-context surprisal theory and information locality principles, we provide a formal information-theoretic perspective that connects the inductive biases of LMs and the statistical properties of language thought to be shaped by functional pressures in humans \cite{gibson2001dependency,futrell-etal-2020-lossy}.
Through two sets of experiments---one on perturbations of a natural language corpus and another using \pfsaAcr{}s for the controlled generation of synthetic languages---we consistently find that both LSTM and Transformer architectures model languages with lower \mlocal entropy more effectively.
The shared sensitivity to information locality between artificial and human learners suggests a common inductive bias shaping both systems, possibly because both systems process language incrementally.\looseness-1

Our findings open several promising directions for future research. One avenue is to explore inductive biases beyond information locality, such as sensitivity to hierarchical structure or structure dependence~\cite{chomsky1957,EVERAERT2015729}, in order to better understand the full range of factors influencing language learnability in both humans and machines. Additionally, incorporating local entropy into model evaluation or as a regularization signal during training could lead to more robust and cognitively plausible language models \cite{timkey-linzen-2023-language,de-varda-marelli-2024-locally}.\looseness-1

In summary, our study presents new evidence of the strong sensitivity of neural LMs to a language's local statistical structure, advancing our understanding of their inductive biases and establishing a foundation for future research on assessing and improving the alignment between artificial and human language processors.

\section*{Limitations}
While our study reveals a strong correlation between \mlocal entropy and LM performance, it is important to note that our analysis remains correlational.
We have not yet pinpointed the precise mechanisms by which variations in local uncertainty impact the learning dynamics of neural language models.
Additionally, our controlled experiments relied on \pfsaAcr{}s to generate languages with varied \mlocal entropy. Although PFSAs provide a tractable framework for such investigations, they capture only a limited set of possible language models.
It is plausible that employing more expressive formalisms, such as pushdown automata or even more powerful models, might reveal different relationships between local entropy and model performance.
In fact, empirical work has repeatedly suggested that some neural LM architectures do not have human-like inductive biases~\citep[][\textit{inter alia}]{mccoy-etal-2020-syntax,yedetore-etal-2023-poor}.\looseness=-1

Furthermore, our focus on information locality, as measured by \mlocal entropy, does not preclude the influence of other inductive biases that may also play significant roles in learning.
Future work will need to disentangle these factors to fully understand their individual and combined effects on neural LMs.\looseness-1

\section*{Ethical considerations}
We employed AI-based tools (ChatGPT and GitHub Copilot) for writing and coding assistance.
These tools were used in compliance with the ACL Policy on the Use of AI Writing Assistance.

\section*{Acknowledgments}
AS was supported by an ETH AI Center Doctoral Fellowship. MG was supported by an ETH Zürich Postdoctoral Fellowship.

\bibliography{custom}
\bibliographystyle{acl_natbib}

\appendix
\onecolumn

\section{Probabilistic Finite-State Automata} \label{sec:pfsa_detail}
Before listing a number of useful results for computing quantities of interest in \pfsaAcr{}s, we list a few relevant definitions.
\begin{definition}
Let $\wfsa = \wfsatuple$ be a \pfsaAcr{}.
We define the \defn{transition matrix} $\tranMtx \in \R^{\nstates \times \nstates}$ of $\wfsa$ as
the matrix containing the probabilities of transitioning from state $\stateq_i \in \states$ to state $\stateq_j \in \states$ in $\wfsa$ with any $\sym \in \alphabet$:
\begin{equation}\label{eq:M-mtx}
    \etranMtx_{i, j} \defeq \sum_{\sym \in \alphabet} \sum_{\edge{\stateq_i}{\sym}{w}{\stateq_j}\,\in\,\trans} w,
\end{equation}
where we fix some arbitrary enumeration of states $(\stateq_1, \ldots, \stateq_{\nstates})$.
We also define the \defn{symbol-specific transition matrix} $\tranMtx^{\left(\sym\right)}$ where $\etranMtx^{(\sym)}_{i,j}$ is the probability of transitioning from state $\stateq_i \in \states$ to state $\stateq_j \in \states$ in $\wfsa$ with a \sym-labeled transition:
\begin{equation}
    \esTranMtx{\sym}_{i,j} \defeq \sum_{\edge{\stateq_i}{\sym}{w}{\stateq_j}\,\in\,\trans} w.
\end{equation}
We naturally extend this definition to strings and define for $\str = \sym_1 \cdots \sym_\strlen$:
\begin{equation}
    \sTranMtx{\str} \defeq \sTranMtx{\sym_1} \cdots \sTranMtx{\sym_\strlen}.
\end{equation}
Furthermore, we define \defn{Kleene closure} of $\tranMtx$ as
\begin{align}
    \kleene{\tranMtx} \defeq \sum_{n=0}^{\infty} \tranMtx^{n},
\end{align}
where the series above converges when the spectral radius satisfies $\rho(\tranMtx) < 1$.
Additionally, when the $(\mI - \tranMtx)$ is invertible, this is simply calculated as
\begin{align}
    \kleene{\tranMtx} = (\mI - \tranMtx)^{-1}.
\end{align}

\end{definition}
\begin{remark} \label{rem:stringsum}
    It is a standard exercise to show that $\esTranMtx{\str}_{i, j}$ equals the sum of the weights of $\str$-scanning strings from $\stateq_i$ to $\stateq_j$.
\end{remark}
\begin{definition} \label{def:emission-mtx}
    Let $\wfsa = \wfsatuple$ be a \pfsaAcr{}.
    The \defn{emission matrix} $\mE \in \R^{\nstates \times \nsymbols}$ is defined by
    \begin{equation} \label{eq:emission-mtx}
        \emE_{i, k} \defeq \sum_{\edge{\stateq_i}{\sym_k}{w}{\stateq'} \in \trans} w.
    \end{equation}
\end{definition}
For a \pfsaAcr{} $\wfsa$ and a path $\apath = \edge{\stateq_0}{\sym_1}{w_1}{} \cdots \edge{}{\sym_N}{w_N}{\stateq_N} \in \paths\left(\wfsa\right)$, we write $\prevqFun{\apath} \defeq \stateq_0$ for the initial state of the path and $\nextqFun{\apath} \defeq \stateq_N$ for its final state.
We define the path prefix random variable $\rvpathpfx$ distributed as
\begin{equation}
    \prob\left(\rvpathpfx = \apath \right) \propto \initfFun{\prevqFun{\apath}} \innerpathweightFun{\apath}.
\end{equation}
This is analogous to prefix string probabilities, and the distribution is normalizable exactly when prefix probabilities are.
Similarly, we define $\rvpathinfx$, which is distributed as
\begin{equation}
    \prob\left(\rvpathinfx = \apath \right) \propto \sum_{\substack{\apath' \in \paths\left(\wfsa\right) \\ \nextqFun{\apath'} = \prevqFun{\apath}}} \initfFun{\prevqFun{\apath'}} \innerpathweightFun{\apath'} \innerpathweightFun{\apath},
\end{equation}
which is analogous to string infix probabilities.

\pfsaAcr{}s are particularly attractive to study since they allow us to exactly compute many interesting quantities efficiently.
In the following section, we describe how one can compute the \mlocal entropy of the language model defined by a \pfsaAcr{}.

\subsection{Useful Properties of Probabilistic Finite-State Automata}
The following lemmata hold for a general \pfsaAcr{} $\wfsa = \wfsatuple$ and the language model $\pLMwfsa$ induced by it.
None of the results are novel, but we include full proofs for completeness.
\begin{lemma}[Computing the probability of a string with a \pfsaAcr] \label{lem:pfsa-str-prob}
    The probability of $\str \in \klAlphabet$ is:
    \begin{equation}
        \pLMwfsaFun{\str} = \vlambda^\top \sTranMtx{\str} \vrho.
    \end{equation}
\end{lemma}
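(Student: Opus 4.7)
The plan is to unfold the quadratic form $\vlambda^\top \sTranMtx{\str} \vrho$ coordinate-wise and identify each summand with the contribution of a single path in $\paths(\wfsa, \str)$. The central fact needed is \Cref{rem:stringsum}, which states that the $(i,j)$-entry of $\sTranMtx{\str}$ equals the sum of inner weights of all $\str$-scanning paths from $\stateq_i$ to $\stateq_j$. Given this, the lemma reduces to bookkeeping.

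More concretely, fix the enumeration $(\stateq_1, \ldots, \stateq_{\nstates})$ of $\states$, so that $\vlambda_i = \initfFun{\stateq_i}$ and $\vrho_j = \finalfFun{\stateq_j}$. I would then write
$$\vlambda^\top \sTranMtx{\str} \vrho \;=\; \sum_{i,j} \initfFun{\stateq_i} \, \esTranMtx{\str}_{i,j} \, \finalfFun{\stateq_j},$$
apply \Cref{rem:stringsum} to substitute $\esTranMtx{\str}_{i,j} = \sum_{\apath} \innerpathweightFun{\apath}$, where $\apath$ ranges over elements of $\paths(\wfsa, \str)$ with $\prevqFun{\apath} = \stateq_i$ and $\nextqFun{\apath} = \stateq_j$, and swap the order of summation. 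The resulting expression is $\sum_{\apath \in \paths(\wfsa, \str)} \initfFun{\prevqFun{\apath}} \innerpathweightFun{\apath} \finalfFun{\nextqFun{\apath}}$, which is, by definition, equal to $\sum_{\apath \in \paths(\wfsa, \str)} \pathweightFun{\apath} = \pLMwfsaFun{\str}$.

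The only step requiring real work is \Cref{rem:stringsum}, which I would verify by induction on $|\str|$. The base case $|\str| = 1$ holds by the very definition of $\esTranMtx{\sym}_{i,j}$ as the sum of weights of $\sym$-labeled transitions from $\stateq_i$ to $\stateq_j$. For $|\str| = n+1$, factor $\str = \str' \sym$, so that $\sTranMtx{\str} = \sTranMtx{\str'} \sTranMtx{\sym}$ gives $\esTranMtx{\str}_{i,j} = \sum_{k} \esTranMtx{\str'}_{i,k} \, \esTranMtx{\sym}_{k,j}$. By the inductive hypothesis, each product on the right is the total inner weight of concatenations consisting of an $\str'$-scanning path from $\stateq_i$ to $\stateq_k$ followed by a single $\sym$-labeled transition from $\stateq_k$ to $\stateq_j$; summing over $k$ enumerates exactly the $\str$-scanning paths from $\stateq_i$ to $\stateq_j$ with their correct inner weights. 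Since this induction is entirely routine and the remainder of the argument is purely algebraic manipulation of a bilinear form, I do not anticipate any substantive obstacle.
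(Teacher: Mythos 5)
Your proposal is correct and follows essentially the same route as the paper: expand the bilinear form $\vlambda^\top \sTranMtx{\str} \vrho$, invoke \cref{rem:stringsum} to identify $\esTranMtx{\str}_{i,j}$ with the total inner weight of $\str$-scanning paths from $\stateq_i$ to $\stateq_j$, and reassemble the path weights. The only difference is that you also sketch the routine induction behind \cref{rem:stringsum}, which the paper dismisses as a standard exercise.
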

\begin{proof}
    We know from \cref{rem:stringsum} that $\sTranMtx{\str}_{i, j}$ corresponds to the sum of the path weights from $\stateq_i$ to $\stateq_j$.
    Multiplying each entry with the source state's initial weight and the target state's final weight, we arrive at the result.
\end{proof}

\begin{lemma}[Computing the prefix probability of a string] \label{lem:pfsa-prefix-prob}
    The prefix probability of $\str \in \klAlphabet$ is:
    \begin{equation}
        \pLMwfsaprefixFun{\str} = \vlambda^\top \sTranMtx{\str} \kleene{\tranMtx} \vrho. %
    \end{equation}
\end{lemma}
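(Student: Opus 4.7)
The plan is to reduce the prefix probability to the closed-form string probability from the preceding lemma, by factoring the sum over continuations. Starting from the definition in \eqref{eq:prefix-prob}, I would write
\begin{equation*}
    \pLMwfsaprefixFun{\str} = \sum_{\str' \in \klAlphabet} \pLMwfsaFun{\str\str'}
    = \sum_{\str' \in \klAlphabet} \vlambda^\top \sTranMtx{\str\str'} \vrho,
\end{equation*}
using \cref{lem:pfsa-str-prob}. Since $\sTranMtx{\str\str'} = \sTranMtx{\str}\sTranMtx{\str'}$ (by the definition of symbol-specific transition matrices extended to strings), and since $\vlambda^\top \sTranMtx{\str}$ and $\vrho$ do not depend on $\str'$, the sum factors as $\vlambda^\top \sTranMtx{\str} \bigl(\sum_{\str' \in \klAlphabet} \sTranMtx{\str'}\bigr) \vrho$.

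The key auxiliary identity is therefore $\sum_{\str' \in \klAlphabet} \sTranMtx{\str'} = \kleene{\tranMtx}$. To see this, I would partition $\klAlphabet$ by length and observe that, from \eqref{eq:M-mtx}, $\tranMtx = \sum_{\sym \in \alphabet} \sTranMtx{\sym}$, so by a straightforward induction (distributing matrix multiplication over the finite sums) one obtains $\sum_{\str' \in \alphabet^n} \sTranMtx{\str'} = \tranMtx^{n}$ for every $n \geq 0$, with the $n = 0$ case giving $\sTranMtx{\varepsilon} = \mI$. Summing over $n$ then yields $\sum_{\str' \in \klAlphabet} \sTranMtx{\str'} = \sum_{n=0}^{\infty}\tranMtx^{n} = \kleene{\tranMtx}$, exactly the Kleene closure as defined above.

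The main subtlety is justifying the interchange of the infinite sum over $\str'$ with the matrix multiplications, and the convergence of the Kleene series. This is where the standing assumption $\meanStrLen < +\infty$ (and hence $\prefixnorm < +\infty$) does the work: it guarantees that the entrywise nonnegative series $\sum_{n=0}^{\infty} \tranMtx^{n}$ converges, so $\rho(\tranMtx) < 1$ and all rearrangements are valid by Tonelli for nonnegative terms. Combining the factorization with the identity gives
\begin{equation*}
    \pLMwfsaprefixFun{\str} = \vlambda^\top \sTranMtx{\str} \kleene{\tranMtx} \vrho,
\end{equation*}
as claimed.
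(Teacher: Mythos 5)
Your proposal is correct and follows essentially the same route as the paper's proof: apply \cref{lem:pfsa-str-prob}, factor $\sTranMtx{\str\str'} = \sTranMtx{\str}\sTranMtx{\str'}$, pull the sum over continuations inside, and identify $\sum_{\str' \in \klAlphabet} \sTranMtx{\str'}$ with $\kleene{\tranMtx}$. The only difference is that you spell out the length-partition induction behind that last identity and the convergence justification, which the paper leaves implicit.
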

\begin{proof}
    \begin{subequations}
    \begin{align}
        \pLMwfsaprefixFun{\str} &=
        \sum_{\str' \in \klAlphabet} \pLMwfsaFun{\str \str'} \\
        &= \sum_{\str' \in \klAlphabet} \vlambda^\top \sTranMtx{\str \str'} \vrho \justification{\cref{lem:pfsa-str-prob}} \\
        &= \sum_{\str' \in \klAlphabet} \vlambda^\top \sTranMtx{\str} \sTranMtx{\str'} \vrho \\
        &= \vlambda^\top \sTranMtx{\str} \left(\sum_{\str' \in \klAlphabet} \sTranMtx{\str'}\right) \vrho \\
        &= \vlambda^\top \sTranMtx{\str} \kleene{\tranMtx} \vrho
    \end{align}
\end{subequations}
\end{proof}

\begin{lemma}[Computing the next-symbol distribution] \label{lem:next-symbol-dist}
    Let $\str \in \klAlphabet$.
    The distribution over the next symbols after observing $\str$ is
    \begin{equation}
        \probFun{\sym_k \mid \yieldFun{\rvpathpfx} = \str} = \frac{\left(\vlambda^\top \sTranMtx{\str} \mE\right)_k}{\pLMwfsaprefixFun{\str}}.
    \end{equation}
\end{lemma}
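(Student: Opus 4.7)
The plan is to expand $\probFun{\sym_k \mid \yieldFun{\rvpathpfx} = \str}$ as a ratio of joint to marginal probability and evaluate both in matrix form. For the marginal, I would sum $\initfFun{\prevqFun{\apath}}\innerpathweightFun{\apath}/\prefixnorm$ over all $\apath$ scanning $\str$; by \cref{rem:stringsum} this equals $\vlambda^\top \sTranMtx{\str}\vone/\prefixnorm$. To identify this with $\pLMwfsaprefixFun{\str}/\prefixnorm$ as given by \cref{lem:pfsa-prefix-prob}, I would invoke the \pfsaAcr{} row-sum constraint $\tranMtx\vone + \vrho = \vone$, which rearranges to $\kleene{\tranMtx}\vrho = \vone$; hence $\vlambda^\top \sTranMtx{\str}\vone = \vlambda^\top \sTranMtx{\str}\kleene{\tranMtx}\vrho = \pLMwfsaprefixFun{\str}$.

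For the joint, I would interpret ``the next symbol is $\sym_k$'' as ``the transition extending the current path prefix is $\sym_k$-labeled''. Conditioned on $\rvpathpfx = \apath$ with $\apath$ ending at state $\nextqFun{\apath}$, this event carries weight $\emE_{\nextqFun{\apath}, k}$ by \cref{def:emission-mtx}. Summing over $\apath \in \paths(\wfsa, \str)$ gives
\begin{equation}
    \probFun{\sym_k, \yieldFun{\rvpathpfx} = \str} = \frac{1}{\prefixnorm}\sum_{\apath \in \paths(\wfsa, \str)} \initfFun{\prevqFun{\apath}}\innerpathweightFun{\apath}\, \emE_{\nextqFun{\apath}, k} = \frac{(\vlambda^\top \sTranMtx{\str}\mE)_k}{\prefixnorm},
\end{equation}
where the second equality again uses \cref{rem:stringsum} together with the definition of matrix multiplication. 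Dividing the joint by the marginal cancels $\prefixnorm$ and yields the claim.

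The step I expect to require the most care is making the joint distribution over $(\rvpathpfx, \sym_k)$ precise, since the preceding exposition does not explicitly couple a next-symbol random variable to the path prefix. Once one fixes the convention that, given $\rvpathpfx = \apath$, the label of the extending transition is distributed as $\emE_{\nextqFun{\apath}, \cdot}$ on $\alphabet$, the rest is routine matrix calculus. As a sanity check, $\sum_{k} (\vlambda^\top \sTranMtx{\str}\mE)_k = \vlambda^\top \sTranMtx{\str}(\vone - \vrho) = \pLMwfsaprefixFun{\str} - \pLMwfsaFun{\str}$, so dividing by $\pLMwfsaprefixFun{\str}$ leaves exactly the residual $\eos$ mass prescribed by \cref{eq:eos-prob}, confirming that the resulting object is a proper distribution over $\eosalphabet$.
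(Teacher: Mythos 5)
Your proposal is correct and follows essentially the same route as the paper: express the conditional as joint over marginal, sum over $\str$-scanning paths grouped by their final state, and recognize the emission matrix to obtain $\left(\vlambda^\top \sTranMtx{\str} \mE\right)_k$. The only differences are elaborations of steps the paper leaves implicit—your explicit identification of the path-prefix marginal with $\pLMwfsaprefixFun{\str}$ via $\kleene{\tranMtx}\vrho = \vone$, and the closing $\eos$-mass sanity check—both of which are sound.
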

\begin{proof}
    \begin{subequations}
        \begin{align}
            \probFun{\sym_k \mid \yieldFun{\rvpathpfx} = \str}             &= \frac{\prob\left(\sym_k, \yieldFun{\rvpathpfx} = \str\right)}{\prob\left(\yieldFun{\rvpathpfx} = \str\right)} \\
            &= \frac{1}{\pLMwfsaprefixFun{\str}} \sum_{\apath \in \paths(\wfsa, \str)} \initfFun{\prevqFun{\apath}} \; \innerpathweightFun{\apath} \; \prob\left(\sym_k \mid \nextqFun{\apath} \right) \justification{Summing over all $\str$-yielding paths} \\
            &= \frac{1}{\pLMwfsaprefixFun{\str}} \sum_{j = 1}^{\nstates} \prob\left(\sym_k \mid \stateq_j \right) \sum_{\substack{\apath \in \paths(\wfsa, \str) \\ \nextqFun{\apath} = \stateq_j}} \initfFun{\prevqFun{\apath}} \; \innerpathweightFun{\apath} \\
            &= \frac{1}{\pLMwfsaprefixFun{\str}} \sum_{j = 1}^{\nstates} \prob\left(\sym_k \mid \stateq_j \right) \sum_{i = 1}^{\nstates}  \initfFun{\stateq_i} \sum_{\substack{\apath \in \paths(\wfsa, \str) \\ \prevqFun{\apath} = \stateq_i, \nextqFun{\apath} = \stateq_j}} \; \innerpathweightFun{\apath} \\
            &= \frac{1}{\pLMwfsaprefixFun{\str}} \sum_{j = 1}^{\nstates} \prob\left(\sym_k \mid \stateq_j \right) \sum_{i = 1}^{\nstates}  \initfFun{\stateq_i} \sTranMtx{\str}_{i,j} \\
            &= \frac{1}{\pLMwfsaprefixFun{\str}} \sum_{j = 1}^{\nstates} \prob\left(\sym_k \mid \stateq_j \right) \left(\vlambda^\top \sTranMtx{\str}\right)_{j} \\
            &= \frac{1}{\pLMwfsaprefixFun{\str}} \sum_{j = 1}^{\nstates} \left(\vlambda^\top \sTranMtx{\str}\right)_{j} w \justification{$\edge{\stateq_i}{\sym_k}{w}{\stateq'} \in \trans$} \\
            &= \frac{1}{\pLMwfsaprefixFun{\str}} \left(\vlambda^\top \sTranMtx{\str} \mE \right)_k \justification{\cref{eq:emission-mtx}}
        \end{align}
    \end{subequations}
\end{proof}

\begin{lemma}[Computing the infix probability of a string] \label{lem:pfsa-infix}
    The infix probability of $\str \in \klAlphabet$ is:
    \begin{equation}
        \pLMwfsainfixFun{\str} = \vlambda^\top \kleene{\tranMtx} \sTranMtx{\str} \kleene{\tranMtx} \vrho. %
    \end{equation}
\end{lemma}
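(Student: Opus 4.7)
The plan is to mirror the proof of \cref{lem:pfsa-prefix-prob}, but with the summation happening on \emph{both} sides of $\str$ instead of only on the right. First I would unfold the definition of the infix probability from \cref{eq:infix-prob}, apply \cref{lem:pfsa-str-prob} to rewrite each term $\pLMwfsaFun{\str' \str \str''}$ as $\vlambda^\top \sTranMtx{\str' \str \str''} \vrho$, and then use the multiplicativity $\sTranMtx{\str' \str \str''} = \sTranMtx{\str'}\sTranMtx{\str}\sTranMtx{\str''}$ which follows directly from the definition of $\sTranMtx{\cdot}$ on concatenations.

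The next step is to pull the constant factor $\vlambda^\top \sTranMtx{\str}\, \vrho$-shaped middle piece out of the double sum. Concretely, I would factor the expression as $\vlambda^\top \bigl(\sum_{\str' \in \klAlphabet} \sTranMtx{\str'}\bigr)\, \sTranMtx{\str}\, \bigl(\sum_{\str'' \in \klAlphabet} \sTranMtx{\str''}\bigr)\, \vrho$, and then apply the identity $\sum_{\str \in \klAlphabet} \sTranMtx{\str} = \kleene{\tranMtx}$ on both sides. This identity was used implicitly in \cref{lem:pfsa-prefix-prob} and follows from the fact that iterating the per-symbol transition matrices over all strings of length $n$ gives $\tranMtx^n$, so summing over all lengths yields the Kleene star.

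The main thing that needs care is justifying the interchange of the two summations and the factoring step; formally this requires absolute summability of $\sum_{\str', \str''} \vlambda^\top \sTranMtx{\str'} \sTranMtx{\str} \sTranMtx{\str''} \vrho$, which, following the paper's standing assumption that $\meanStrLen < \infty$, is equivalent to $\rho(\tranMtx) < 1$ and hence to convergence of $\kleene{\tranMtx} = (\mI - \tranMtx)^{-1}$. I would note this convergence condition in passing (the lemma is understood to apply under the same finiteness assumptions the paper maintains throughout), since without it the closed form need not even be well-defined, but I would not dwell on it: the arithmetic is otherwise a straightforward two-sided analogue of the prefix case.
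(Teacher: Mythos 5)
Your proposal is correct and matches the paper's argument in substance: both rest on the multiplicativity of $\sTranMtx{\cdot}$ over concatenation and the identity $\sum_{\str' \in \klAlphabet} \sTranMtx{\str'} = \kleene{\tranMtx}$, with the interchange of sums justified by non-negativity of the terms. The only cosmetic difference is that the paper first rewrites $\pLMwfsainfixFun{\str} = \sum_{\str'} \pLMwfsaprefixFun{\str'\str}$ and invokes \cref{lem:pfsa-prefix-prob} to absorb one of the two sums, whereas you expand the double sum directly from \cref{lem:pfsa-str-prob}; both yield the same computation.
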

\begin{proof}
    \begin{subequations}
    \begin{align}
        \pLMwfsainfixFun{\str} &=
        \sum_{\str' \in \klAlphabet} \pLMwfsaprefixFun{\str' \str} \\
        &= \sum_{\str' \in \klAlphabet} \vlambda^\top \sTranMtx{\str' \str} \kleene{\tranMtx} \vrho \justification{\cref{lem:pfsa-prefix-prob}} \\
        &= \vlambda^\top \left(\sum_{\str' \in \klAlphabet} \sTranMtx{\str'} \right) \sTranMtx{\str} \kleene{\tranMtx} \vrho \\
        &= \vlambda^\top \kleene{\tranMtx} \sTranMtx{\str} \kleene{\tranMtx} \vrho
    \end{align}
\end{subequations}
\end{proof}

\begin{lemma}[Computing the infix next-symbol distribution] \label{lem:pfsa-infix-next-symbol}
    Let $\ctx \in \mAlphabet$.
    The distribution over the next symbols after observing $\ctx$ as the last $m - 1$ symbols is
    \begin{equation}
        \probFun{\sym_k \mid \yieldFun{\rvpathinfx} = \ctx} = \frac{\left(\vlambda^\top \kleene{\tranMtx} \sTranMtx{\ctx} \mE\right)_k}{\pLMwfsainfixFun{\ctx}}.
    \end{equation}
\end{lemma}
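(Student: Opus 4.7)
The plan is to mirror the argument of \cref{lem:next-symbol-dist} almost verbatim, with the single modification that the ``left factor'' $\vlambda^\top$ used there is replaced by the factor $\vlambda^\top \kleene{\tranMtx}$, which sums over all possible prefix paths that lead into the starting state of a $\ctx$-scanning path. The denominator will fall out of \cref{lem:pfsa-infix}, so all the work is in simplifying the numerator.

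First, I would write
\[
\probFun{\sym_k \mid \yieldFun{\rvpathinfx} = \ctx} = \frac{\prob\left(\sym_k, \yieldFun{\rvpathinfx} = \ctx\right)}{\prob\left(\yieldFun{\rvpathinfx} = \ctx\right)},
\]
and then, by the distribution of $\rvpathinfx$ and \cref{lem:pfsa-infix}, recognize that the denominator is proportional to $\pLMwfsainfixFun{\ctx}$ (with a normalization constant that will cancel against the same constant in the numerator).

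Next, I would expand the numerator by summing over all paths $\apath \in \paths(\wfsa, \ctx)$ and, for each such $\apath$, over all possible ``prefix paths'' $\apath'$ with $\nextqFun{\apath'} = \prevqFun{\apath}$, together with the emission probability of $\sym_k$ from $\nextqFun{\apath}$. Grouping by $\prevqFun{\apath} = \stateq_i$ and $\nextqFun{\apath} = \stateq_j$, the inner triple sum factorizes as
\[
\Biggl(\sum_{i} \Bigl(\sum_{\apath' :\, \nextqFun{\apath'} = \stateq_i} \initfFun{\prevqFun{\apath'}} \innerpathweightFun{\apath'}\Bigr) \sum_{j} \Bigl(\sum_{\substack{\apath \in \paths(\wfsa, \ctx) \\ \prevqFun{\apath}=\stateq_i,\, \nextqFun{\apath}=\stateq_j}} \innerpathweightFun{\apath}\Bigr) \probFun{\sym_k \mid \stateq_j}\Biggr).
\]
By \cref{rem:stringsum} the middle sum equals $\sTranMtx{\ctx}_{i,j}$; by the proof of \cref{lem:pfsa-prefix-prob} (with the empty scan) the leftmost inner sum equals $(\vlambda^\top \kleene{\tranMtx})_i$; and the emission term equals $\emE_{j,k}$ by \cref{def:emission-mtx}. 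Collecting these into a matrix product yields $(\vlambda^\top \kleene{\tranMtx} \sTranMtx{\ctx} \mE)_k$, which combined with the denominator gives the claimed formula.

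The main obstacle, as I see it, is not any single calculation but rather keeping the bookkeeping clean: one has to be careful that the proportionality constant in the definition of $\rvpathinfx$ is exactly the same one that appears in $\pLMwfsainfixFun{\ctx}$ (so that the two cancel), and that the ``infix'' decomposition into a prefix path followed by a $\ctx$-scanning path followed by an emission does not double count paths when $\ctx$ occurs multiple times as an infix of an accepted string. The latter is handled automatically because we are summing over path occurrences rather than string occurrences, so distinct placements of $\ctx$ correspond to distinct $(\apath', \apath)$ pairs and are counted independently, which matches the definition of $\rvpathinfx$.
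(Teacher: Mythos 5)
Your proposal is correct and follows essentially the same route as the paper's proof: both expand the joint probability over all paths ending in a $\ctx$-scanning segment, group by the initial and final states of that segment, identify the prefix-path sum with $(\vlambda^\top \kleene{\tranMtx})_i$, the $\ctx$-segment with $\sTranMtx{\ctx}_{i,j}$, and the emission term with $\emE_{j,k}$, then cancel the normalizer against $\pLMwfsainfixFun{\ctx}$. Your phrasing in terms of pairs $(\apath', \apath)$ rather than single paths scanning $\str'\ctx$ is only a cosmetic difference, and your remark about counting path occurrences rather than string occurrences correctly identifies why no double counting arises.
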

\begin{proof}
    \begin{subequations}
        \begin{align}
            \probFun{\sym_k \mid \yieldFun{\rvpathinfx} = \ctx}
            &= \frac{\prob\left(\sym_k, \yieldFun{\rvpathinfx} = \ctx\right)}{\prob\left(\yieldFun{\rvpathinfx} = \ctx\right)} \\
            &= \frac{1}{\pLMwfsainfixFun{\ctx}} \sum_{\str' \in \klAlphabet} \sum_{\apath \in \paths(\wfsa, \str' \ctx)} \initfFun{\prevqFun{\apath}} \; \innerpathweightFun{\apath} \; \prob\left(\sym_k \mid \nextqFun{\apath} \right) \justification{Summing over all $\ctx$-ending paths} \\
            &= \frac{1}{\pLMwfsainfixFun{\ctx}} \sum_{j = 1}^{\nstates} \prob\left(\sym_k \mid \stateq_j \right) \sum_{\str' \in \klAlphabet} \sum_{\substack{\apath \in \paths(\wfsa, \str' \ctx) \\ \nextqFun{\apath} = \stateq_j}} \initfFun{\prevqFun{\apath}} \; \innerpathweightFun{\apath} \\
            &= \frac{1}{\pLMwfsainfixFun{\ctx}} \sum_{j = 1}^{\nstates} \prob\left(\sym_k \mid \stateq_j \right) \sum_{i = 1}^{\nstates} \sum_{\str' \in \klAlphabet} \initfFun{\stateq_i} \sum_{\substack{\apath \in \paths(\wfsa, \str' \ctx) \\ \prevqFun{\apath} = \stateq_i, \nextqFun{\apath} = \stateq_j}} \; \innerpathweightFun{\apath} \\
            &= \frac{1}{\pLMwfsainfixFun{\ctx}} \sum_{j = 1}^{\nstates}  \prob\left(\sym_k \mid \stateq_j \right) \sum_{i = 1}^{\nstates} \sum_{\str' \in \klAlphabet} \initfFun{\stateq_i} \left(\sTranMtx{\str'} \sTranMtx{\ctx}\right)_{i,j} \\
            &= \frac{1}{\pLMwfsainfixFun{\ctx}} \sum_{j = 1}^{\nstates}  \prob\left(\sym_k \mid \stateq_j \right) \sum_{i = 1}^{\nstates} \initfFun{\stateq_i} \sum_{\str' \in \klAlphabet}  \left(\sTranMtx{\str'} \sTranMtx{\ctx}\right)_{i,j} \\
            &= \frac{1}{\pLMwfsainfixFun{\ctx}} \sum_{j = 1}^{\nstates}  \prob\left(\sym_k \mid \stateq_j \right) \sum_{i = 1}^{\nstates} \initfFun{\stateq_i} \left(\kleene{\tranMtx} \sTranMtx{\ctx}\right)_{i,j} \\
            &= \frac{1}{\pLMwfsainfixFun{\ctx}} \sum_{j = 1}^{\nstates} \prob\left(\sym_k \mid \stateq_j \right) \left(\vlambda^\top \kleene{\tranMtx} \sTranMtx{\str}\right)_{j} \\
            &= \frac{1}{\pLMwfsainfixFun{\ctx}} \sum_{j = 1}^{\nstates} \left(\vlambda^\top \kleene{\tranMtx} \sTranMtx{\str}\right)_{j} w \justification{$\edge{\stateq_i}{\sym_k}{w}{\stateq'} \in \trans$} \\
            &= \frac{1}{\pLMwfsainfixFun{\ctx}} \left(\vlambda^\top \kleene{\tranMtx} \sTranMtx{\str} \mE \right)_k \justification{\cref{eq:emission-mtx}}
        \end{align}
    \end{subequations}
\end{proof}

\begin{lemma}[Computing the \mlocal entropy of a \dpfsaAcr{}]
    The \mlocal entropy of $\pLMwfsa$ can be computed in time $\bigOFun{m\nstates^3\nsymbols^{m - 1}}$.
\end{lemma}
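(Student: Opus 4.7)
The plan is to give an explicit algorithm computing the $m$-local entropy within the stated budget by chaining together \cref{lem:pfsa-infix,lem:pfsa-infix-next-symbol} and sharing computation across contexts. First I would perform a one-time preprocessing step: compute $\kleene{\tranMtx} = (\mI - \tranMtx)^{-1}$ by matrix inversion in $\bigOFun{\nstates^3}$, and from it form the row vector $\vu^\top \defeq \vlambda^\top\kleene{\tranMtx}$, the column vector $\vv \defeq \kleene{\tranMtx}\vrho$, and the emission matrix $\mE$ of \cref{def:emission-mtx}. Each auxiliary step is $\bigOFun{\nstates^2}$ or cheaper, and after preprocessing every quantity appearing in \eqref{def:mlocal_entropy} is expressible in terms of $\vu^\top$, $\vv$, $\mE$, and the symbol-specific transition matrices $\sTranMtx{\sym}$.

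Next, I would iterate over the $\nsymbols^{m-1}$ contexts $\ctx \in \mAlphabet$. For each $\ctx = \sym_1\cdots\sym_{m-1}$, I would assemble $\sTranMtx{\ctx} = \sTranMtx{\sym_1}\cdots\sTranMtx{\sym_{m-1}}$ via $m-1$ matrix-matrix products of cost $\bigOFun{\nstates^3}$ apiece, accounting directly for the leading $\bigOFun{m\,\nstates^3\,\nsymbols^{m-1}}$ term. Given $\sTranMtx{\ctx}$, I would apply \cref{lem:pfsa-infix} to compute $\pLMwfsainfixFun{\ctx} = \vu^\top\sTranMtx{\ctx}\vv$ in $\bigOFun{\nstates^2}$, then \cref{lem:pfsa-infix-next-symbol} to compute the row $\vu^\top\sTranMtx{\ctx}\mE$ and divide by $\pLMwfsainfixFun{\ctx}$ to recover $\prob(\cdot\mid\rvctx=\ctx)$ in $\bigOFun{\nstates^2 + \nstates\,\nsymbols}$; evaluating its Shannon entropy adds a further $\bigOFun{\nsymbols}$. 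Accumulating $\sum_{\ctx}\pLMwfsainfixFun{\ctx}$ in the same pass furnishes the normalizer of \eqref{def:normalized_ctx_probability}, and a final sweep combines the per-context entropies with their normalized weights to yield $\entropyFun{\rvContextNextSymbol\mid\rvctx}$.

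The main obstacle is the cost accounting rather than any new algorithmic idea: I need to verify that the per-context postprocessing, summing to $\bigOFun{(\nstates^2 + \nstates\,\nsymbols)\,\nsymbols^{m-1}}$, is dominated by the matrix-product stage and is absorbed into the stated $\bigOFun{m\,\nstates^3\,\nsymbols^{m-1}}$ bound, and to justify that the Kleene closure is well defined---which follows from $\meanStrLen<\infty$ forcing $\rho(\tranMtx)<1$ and hence $\mI - \tranMtx$ invertible. A sharper trie-based variant that shares prefix products across sibling contexts would reduce the matrix-product stage to $\bigOFun{\nstates^3\,\nsymbols^{m-1}}$, so the stated bound has comfortable slack.
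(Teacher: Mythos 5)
Your proposal is correct and follows essentially the same route as the paper: the paper likewise expands the $m$-local entropy as a normalized sum over the $\nsymbols^{m-1}$ contexts and invokes \cref{lem:pfsa-infix,lem:pfsa-infix-next-symbol} to compute $\pLMwfsainfixFun{\ctx}$ and the conditional next-symbol entropy in $\bigOFun{m\nstates^3}$ per context. Your version simply makes the cost accounting (the $m-1$ matrix products per context dominating the per-context postprocessing) and the well-definedness of $\kleene{\tranMtx}$ explicit, which the paper leaves implicit.
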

\begin{proof}
    The \mlocal entropy of $\pLMwfsa$ can be computed as
    \begin{subequations}
        \begin{align} \label{eq:local-ent-defn}
            \mlocalentropy\left(\pLMwfsa\right)
            &= \underset{\ctx \sim \prob(\rvctx = \ctx)}{\E}\bigl[\entropyFun{{\rvContextNextSymbol \mid \rvctx = \ctx}}\bigr] \\
            &= \frac{1}{\infixnorm}\sum_{\ctx \in \mAlphabet} \pLMwfsainfixFun{\ctx} \entropyFun{{\rvContextNextSymbol \mid \rvctx = \ctx}} \label{eq:infix-derivation}
        \end{align}
    \end{subequations}
    The terms $\pLMwfsainfixFun{\ctx}$ and $\entropyFun{\rvContextNextSymbol \mid \rvctx = \ctx}$ can be computed in time $\bigOFun{m \nstates^3}$ as per \cref{lem:pfsa-infix,lem:pfsa-infix-next-symbol} for each $\ctx \in \mAlphabet$.
    Computing this for each $\ctx$ individually, we arrive at the claimed complexity.
\end{proof}

\clearpage
\section{Generating Random \pfsaAcr{}s} \label{sec:pfsa-generation}
\cref{alg:random_dpfsa} and the subprocedure in \cref{alg:generate_outgoing_labels} describe our \pfsaAcr{} generation procedure.

\begin{algorithm*}[h]
\caption{Generate a Random DPFSA.\\[1mm]
\textbf{Input:}
$\nstates$ (number of states), $\nsymbols$ (number of symbols), $\meanStrLen$ (target mean string length), $R_T$ (topology random generator), and $R_W$ (weight random generator).\\[1mm]
\textbf{Output:} A PFSA $A$ with randomly assigned transitions and normalized weights with $\nstates$ states and $\nsymbols$ symbols.\\[1mm]
\textbf{Note:} $\textsc{Choice}(R, S, 1)$ denotes selecting one element uniformly at random from the set $S$ using the random generator $R$.\\
$\texttt{unused}$ is initialized as $\states$, and $\texttt{out-arcs}$ is a mapping that assigns to each state a subset of $\alphabet$ (the allowed outgoing symbols).\\[1mm]
For exponential sampling, we write $w \sim \operatorname{Exp}(0.1)$ to denote that $w$ is drawn from an exponential distribution with rate $0.1$, i.e., with density $f(w)=0.1\,e^{-0.1w}$ for $w\ge 0$.
}\label{alg:random_dpfsa}
\begin{algorithmic}[1]
\Function{RandomDPFSA}{$\nstates,\ \nsymbols,\ \meanStrLen,\ R_T,\ R_W$}
    \State $\qinit \gets \Call{Choice}{R_T, \states, 1}$
    \State \textbf{Initialize} $\wfsa \gets \wfsatuple$
    \State \textbf{Initialize} $\vlambda \gets \mathbf{0}_{\nstates}$ and \textbf{set} $\initfFun{\qinit} \gets 1$
    \State \textbf{Initialize} $\sTranMtx{\sym}$ to a $\nstates \times \nstates$ matrix of zeros for $\sym \in \alphabet$
    \State $\texttt{unused} \gets \states$
    \State $\texttt{state-outgoing-symbols} \gets \Call{GetOutgoingSymbols}{\states,\ \alphabet,\ R_T}$

    \For{$\stateq \in \states$}
        \For{$\sym \in \alphabet$}
            \If{$\texttt{unused} \neq \emptyset$}
                \State $\stateq' \gets \Call{Choice}{R_T, \texttt{unused}, 1}$
                \State Remove $\stateq'$ from $\texttt{unused}$
            \Else
                \State $\stateq' \gets \Call{Choice}{R_T, \states, 1}$
            \EndIf
            \State \textbf{Let} $w \sim \operatorname{Exp}(0.1)$
            \State $\esTranMtx{\sym}_{\stateq, \stateq'} \gets w \cdot \ind{\sym \in \texttt{state-outgoing-symbols}[\stateq]} + 0.001$
        \EndFor
    \EndFor

    \For{$\stateq \in \states$} \Comment{Set final weights and normalize outgoing weights for each state.}
        \State $t \gets \sum_{\sym=0}^{\nsymbols-1} \text{sum}(\sTranMtx{\sym}_{\stateq, :})$
        \State $\finalfFun{\stateq} \gets t / \meanStrLen$
        \State $s \gets t + \finalfFun{\stateq}$
        \For{$\sym \in \{0, \ldots, \nsymbols-1\}$}
            \State $\sTranMtx{\sym}_{\stateq, :} \gets \sTranMtx{\sym}_{\stateq, :} / s$
        \EndFor
        \State $\finalfFun{\stateq} \gets \finalfFun{\stateq} / s$
    \EndFor
    \State \Return $\wfsa$
\EndFunction
\end{algorithmic}
\end{algorithm*}

\begin{algorithm*}[h]
\caption{Generate Outgoing Symbols for Each State.\\[1mm]
\textbf{Input:}
$\nstates$ (number of states), $\nsymbols$ (number of symbols), $R$ (random generator), and $s_{min}$ (min. unique symbols per state; default: 2).\\[1mm]
\textbf{Output:} A list $S$ of $\nstates$ sets, each containing outgoing symbols.\\[1mm]
\textbf{Note:} $\textsc{Choice}(R, X, k)$ selects $k$ distinct elements uniformly at random from $X$ using $R$, and $\textsc{Integers}(R, a, b)$ returns a random integer in $[a, b)$.
}\label{alg:generate_outgoing_labels}
\begin{algorithmic}[1]
\Function{GetOutgoingSymbols}{$\states, \alphabet, R, s_{min}$}
    \State \textbf{Initialize} $\texttt{state-outgoing-symbols} \gets$ an array of $\nstates$ empty sets
    \For{$q \in \states$}
        \State $\texttt{s} \gets \Call{Choice}{R, \mathcal{N}, \min(s_{min}, \nsymbols)}$ \Comment{Assign each state at least $s_{min}$ symbols.}
        \State $\texttt{state-outgoing-symbols}[q] \gets \texttt{state-outgoing-symbols}[q] \cup \texttt{s}$
    \EndFor
    \For{$\sym \in \alphabet$} \Comment{Ensure each symbol appears in at least one set.}
        \State $\stateq \gets \Call{Choice}{R, \states, 1}$
        \State \textbf{Add} $\sym$ to \texttt{state-outgoing-symbols}[q]
    \EndFor
    \State $\texttt{M} \gets \displaystyle \sum_{q=0}^{\nstates-1} \Call{Integers}{R, 0, \max(1, \lfloor \nsymbols/2 \rfloor - s_{min})}$
    \For{$j \gets 1$ \textbf{to} $\texttt{M}$} \Comment{Add random transitions.}
        \State $\sym \gets \Call{Choice}{R, \alphabet, 1}$
        \State $\stateq \gets \Call{Choice}{R, \states, 1}$
        \State \textbf{Add} $\sym$ to \texttt{state-outgoing-symbols}[q]
    \EndFor
    \State \Return $\texttt{state-outgoing-symbols}$
\EndFunction
\end{algorithmic}
\end{algorithm*}

\section{Details of Neural Language Models} \label{sec:architectures}
\subsection{Transformer}
We use a 4-layer causally-masked Transformer with 768-dimensional embeddings, 3,072-dimensional feedforward layers, and 12 attention heads, implemented in PyTorch.
Following \citet{vaswani-etal-2017-attention}, we map input symbols to vectors of size 768 with a scaled embedding layer and add sinusoidal positional encodings.
We use pre-norm instead of post-norm and apply layer norm to the output of the last layer.
We use the same dropout rate throughout the Transformer. We apply it in the same places as \citet{vaswani-etal-2017-attention}, and, as implemented by PyTorch, we also apply it to the hidden units of feedforward sublayers and to the attention probabilities of scaled dot-product attention operations. We always use $\bos$ as the first input symbol to the Transformer.

\subsection{LSTM}
We use a single-layer LSTM~\cite{hochreiter-schmidhuber-1997-lstm} with 512-dimensional hidden units, implemented in PyTorch with some modifications as in \citet{butoi2025training}.
{\newcommand{\lstmaffine}[1]{\affinel{#1}{\thelayerno}{\begin{bmatrix} \thehiddenstatedropoutli{\thelayerno-1}{\thetimestep} \\ \thehiddenstateli{\thelayerno}{\thetimestep-1} \end{bmatrix}}}
\newcommand{\bounds}{(1 \leq \thelayerno \leq \thenumlayers; 1 \leq \thetimestep \leq \thelength)}
\begin{subequations}
\begin{align}
    \thehiddenstateli{0}{\thetimestep} &\defeq \theinputembeddingi{\thetimestep} = \theembeddingmatrix_{\thesymboli{\thetimestep}} && (1 \leq \thetimestep \leq \thelength) \\
    \thehiddenstateli{\thelayerno}{0} &\defeq \tanh(\initialhiddenstateparaml{\thelayerno}) \quad && (1 \leq \thelayerno \leq \thenumlayers) \\
    \thehiddenstatedropoutli{\thelayerno}{\thetimestep} &\defeq \dropout{\thehiddenstateli{\thelayerno}{\thetimestep}} && (0 \leq \thelayerno \leq \thenumlayers; 0 \leq \thetimestep \leq \thelength) \\
    \theinputgateli{\thelayerno}{\thetimestep} &\defeq \logistic{\lstmaffine{i}} \quad && \bounds \\
    \theforgetgateli{\thelayerno}{\thetimestep} &\defeq \logistic{\lstmaffine{f}} \quad && \bounds \\
    \thecandidateli{\thelayerno}{\thetimestep} &\defeq \tanh(\lstmaffine{g}) \quad && \bounds \\
    \theoutputgateli{\thelayerno}{\thetimestep} &\defeq \logistic{\lstmaffine{o}} \quad && \bounds \\
    \thememorycellli{\thelayerno}{\thetimestep} &\defeq \theforgetgateli{\thelayerno}{\thetimestep} \elementwisemultiply \thememorycellli{\thelayerno}{\thetimestep-1} + \theinputgateli{\thelayerno}{\thetimestep} \elementwisemultiply \thecandidateli{\thelayerno}{\thetimestep} \quad && \bounds \\
    \thehiddenstateli{\thelayerno}{\thetimestep} &\defeq \theoutputgateli{\thelayerno}{\thetimestep} \elementwisemultiply \tanh(\thememorycellli{\thelayerno}{\thetimestep}) \quad && \bounds \\
    \thememorycellli{\thelayerno}{0} &\defeq \vzero \quad && (1 \leq \thelayerno \leq \thenumlayers) \\
    \thehiddeni{\thetimestep} &\defeq \thehiddenstatedropoutli{\thenumlayers}{\thetimestep} && (0 \leq \thetimestep \leq \thelength)
\end{align}
\end{subequations}

Here, $\theembeddingmatrix$ is an embedding matrix to map each symbol $\thesymboli{\thetimestep}$ of the input string to an embedding $\theinputembeddingi{\thetimestep} = \theembeddingmatrix_{\thesymboli{\thetimestep}}$.
The size of the embeddings is always $\thehiddensize$ (the size of the hidden vectors), and we denote the number of layers in the model as $\thenumlayers$.
Also, $\elementwisemultiply$ denotes elementwise multiplication, and $\dropout{\cdot}$ indicates the application of dropout.
Here, $\initialhiddenstateparaml{\thelayerno} \in \realset^{\thehiddensize}$ is a learned parameter, making the initial hidden state $\thehiddenstateli{\thelayerno}{0}$ of each layer learned.
A modification is made from the original PyTorch implementation: each pair of $b_{ii}$ and $b_{hi}$, $b_{if}$ and $b_{hf}$, $b_{ig}$ and $b_{hg}$, and $b_{io}$ and $b_{ho}$ is replaced with a single bias parameter per layer.

\section{Hyperparameters for Neural Language Model Training} \label{sec:experiments-details}

Wherever dropout is applicable, we use a dropout rate of 0.1. For layer norm, we initialize weights to 1 and biases to 0. We initialize all other parameters by sampling uniformly from $[-0.1, 0.1]$.

For each epoch, we randomly shuffle the training set and group strings of similar lengths into the same minibatch, enforcing an upper limit of 2,048 symbols per batch, including padding, $\bos$, and $\eos$ symbols.
We train each model by minimizing cross-entropy on the validation set using Adam \citep{kingma-ba-2015-adam}.
We clip gradients with a threshold of 5 using $\normltwo$ norm rescaling.
We take a checkpoint every 10K examples, at which point we evaluate the model on the validation set and update the learning rate and early stopping schedules.
We multiply the learning rate by 0.5 after 5 checkpoints of no decrease in cross-entropy on the validation set, and we stop early after 10 checkpoints of no decrease.
We select the checkpoint with the lowest cross-entropy on the validation set when reporting results.
We train for a maximum of 1K epochs.

\section{Pearson Correlation Coefficients between \Mlocal Entropy and KL Divergence}
\Cref{tab:mlocal_entropy_pfsa_correlation} reports the Pearson correlation coefficients between the \mlocal entropy of \pfsaAcr{} and the estimated KL divergence ($\KLhat$; \Cref{sec:exp2-evaluating}) in \Cref{sec:exp_pfsa}.

\begin{table*}[h]
    \small
    \centering
    \begin{tabular}{llccc|ccc|ccc}
    \toprule
     & $\nstates$ & \multicolumn{3}{c}{16} & \multicolumn{3}{c}{24} & \multicolumn{3}{c}{32} \\
     & $\nsymbols$ & 32 & 48 & 64 & 32 & 48 & 64 & 32 & 48 & 64 \\
    \textsc{Architecture} & \textsc{m} &  &  &  &  &  &  &  &  &  \\
    \midrule
    \multirow[c]{4}{*}{\textsc{LSTM}}
        & \textbf{2} & 0.433 & 0.501 & 0.137 & 0.291 & 0.583 & 0.121 & 0.423 & 0.311 & 0.623 \\
        & \textbf{3} & 0.396 & 0.532 & 0.230 & 0.291 & 0.488 & 0.119 & 0.460 & 0.274 & 0.702 \\
        & \textbf{4} & 0.412 & 0.546 & 0.236 & 0.311 & 0.477 & 0.122 & 0.474 & 0.283 & 0.702 \\
        & \textbf{5} & 0.415 & 0.554 & 0.234 & 0.338 & 0.472 & 0.120 & 0.466 & 0.283 & 0.686 \\
    \midrule
    \multirow[c]{4}{*}{\textsc{Transformer}}
        & \textbf{2} & 0.740 & 0.679 & 0.455 & 0.290 & 0.658 & 0.844 & 0.551 & 0.622 & 0.709 \\
        & \textbf{3} & 0.743 & 0.728 & 0.569 & 0.374 & 0.735 & 0.859 & 0.693 & 0.832 & 0.820 \\
        & \textbf{4} & 0.737 & 0.674 & 0.549 & 0.389 & 0.727 & 0.844 & 0.668 & 0.848 & 0.799 \\
        & \textbf{5} & 0.717 & 0.614 & 0.516 & 0.333 & 0.705 & 0.830 & 0.625 & 0.833 & 0.770 \\
    \bottomrule
    \end{tabular}

    \caption{Pearson correlation coefficients between \mlocal entropy and KL divergence for different architectures, number of states $\nstates$, and alphabet sizes $\nsymbols$.}
    \label{tab:mlocal_entropy_pfsa_correlation}
\end{table*}

\section{Additional Experiments with BabyLM Corpus}\label{sec:appendix-babylm}
We also conducted the same set of experiments using the BabyLM corpus~\cite{choshen2024callpapers2ndbabylm}.
\Cref{tab:local_entropy_babylm} and \Cref{fig:lm_performance_babylm} show our experimental results.
They show the same trends as in our main experiment (\Cref{sec:exp_natural}), but with slightly different tendencies for the \textsc{Reverse} language.

\begin{table*}[h]
    \centering
    \begin{tabular}{ccccc}
    \toprule
     & 2-local entropy & 3-local entropy & 4-local entropy & 5-local entropy \\
    \midrule
    \textsc{Base}                & 5.78      & 3.72      & 2.69      & 2.32      \\
    \midrule
    \textsc{Reverse}             & 6.50      & 3.87      & 2.78      & 2.43      \\
    \textsc{EvenOddShuffle}      & 6.82      & 4.47      & 3.36      & 3.14      \\
    \textsc{OddEvenShuffle}      & 6.94      & 4.45      & 3.39      & 3.14      \\
    \textsc{LocalShuffle (k=3)} & 6.99 ± 0.16 & 4.27 ± 0.08 & 3.21 ± 0.07 & 2.97 ± 0.08 \\
    \textsc{LocalShuffle (k=4)} & 7.09 ± 0.15 & 4.35 ± 0.06 & 3.25 ± 0.03 & 3.06 ± 0.04 \\
    \textsc{LocalShuffle (k=5)} & 7.15 ± 0.13 & 4.42 ± 0.06 & 3.29 ± 0.04 & 3.08 ± 0.03 \\
    \textsc{LocalShuffle (k=6)} & 7.25 ± 0.11 & 4.47 ± 0.07 & 3.35 ± 0.06 & 3.14 ± 0.05 \\
    \textsc{LocalShuffle (k=7)} & 7.28 ± 0.12 & 4.50 ± 0.08 & 3.39 ± 0.07 & 3.19 ± 0.08 \\
    \textsc{DeterministicShuffle} & 7.41      & 4.69      & 3.59      & 3.40      \\
    \bottomrule
    \end{tabular}
    \caption{M-local entropy values for \textsc{Base} (original) corpus and different transformed corpora. ``Local shuffle'' refers to the \textsc{K-localDeterministicShuffle}. Values are shown as mean ± standard deviation (averaged over different random seeds).}
    \label{tab:local_entropy_babylm}
\end{table*}

\begin{figure*}
    \centering
    \includegraphics[scale=0.23]{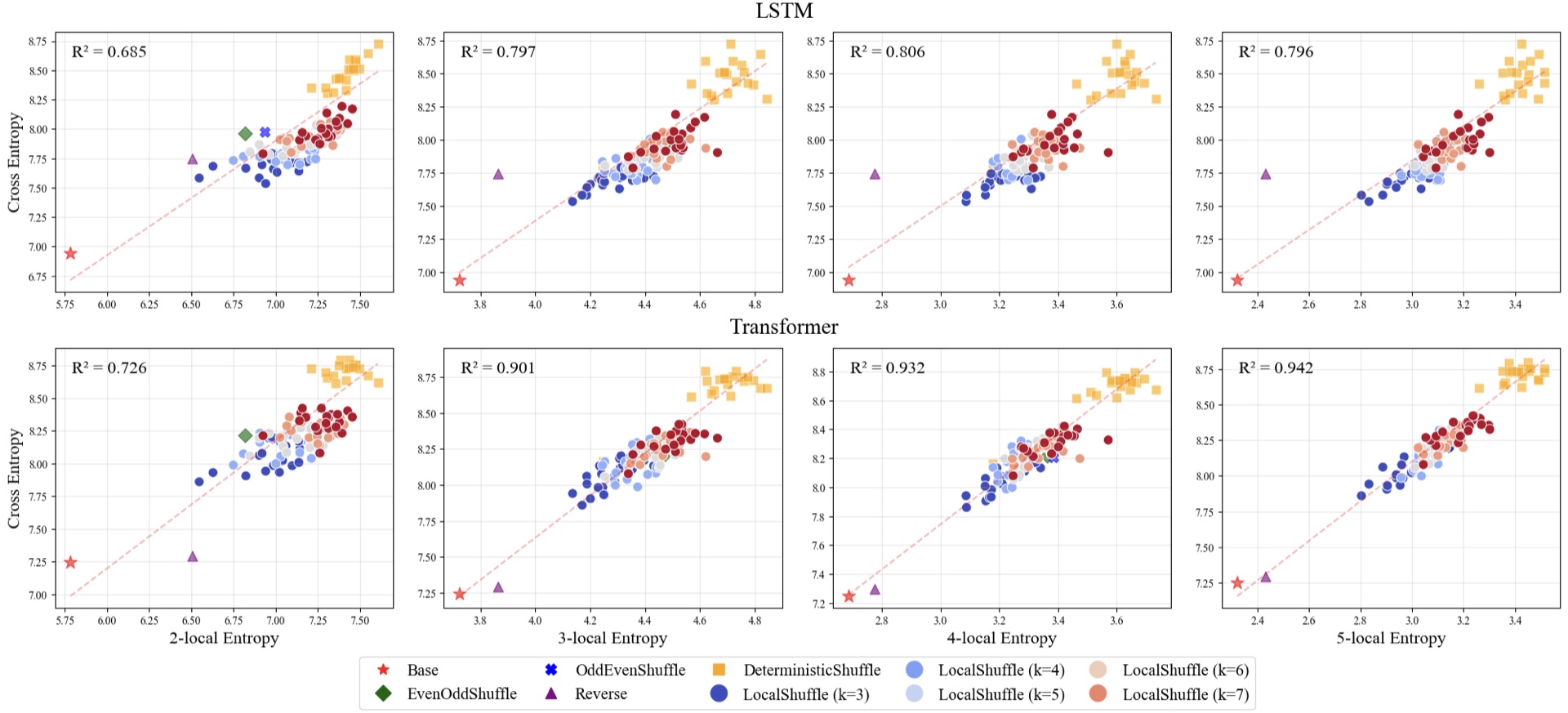}
    \caption{Scatter plots of next-symbol cross-entropy (y-axis) versus \mlocal entropy (x-axis) for $m \in \{2,3,4,5\}$, for both LSTM (top row) and causally-masked Transformer encoder (Transformer; bottom row) models.
    Each marker type/color corresponds to a different perturbation (e.g., \texttt{Reverse}, \texttt{DeterministicShuffle}, \textsc{K-localDeterministicShuffle} with various window sizes, etc.).
    The red star indicates the unperturbed \textsc{Base} condition (original corpus).
    The dashed line in each panel is a linear fit, with $R^2$ indicating the coefficient of determination.}
    \label{fig:lm_performance_babylm}
    \vspace{-15pt}
\end{figure*}

\section{Computational Resources}
Across all experiments, we used a total of approximately 717.5 GPU hours.
Training was conducted on NVIDIA GeForce RTX 4090 24GB and NVIDIA Quadro RTX 6000 24GB GPUs.

\section{License of the Data}
The BLLIP corpus~\citep{charniakeugene2000BLLIP} is used under the terms of the BLLIP 1987-89 WSJ Corpus Release 1 License Agreement.

\end{document}